\newtheorem{theorem}{Theorem}
\newtheorem{proposition}{Proposition}
\newtheorem{lemma}{Lemma}
\newtheorem{corollary}{Corollary}
\newcommand{\BlackBox}{\rule{1.5ex}{1.5ex}}  
\newenvironment{proof}{\par\noindent{\bf Proof\ }}{\hfill\BlackBox\\[2mm]}
\def\R{\mathbb{R}}
\def\N{\mathbb{N}}
\def\E{\mathbb{E}}
\def\P{\mathbb{P}}
\def\1{\mathbbm{1}}
\DeclareMathOperator*{\argmax}{\arg\,\max}
\DeclareMathOperator*{\supp}{supp}
\def\X{\mathcal{X}}
\def\uq{\underline{q}}
\def\G{\mathcal{G}}
\def\Rh{\hat{R}}
\def\pb{\bar{\pi}}
\begin{document}

\title{Optimal Discovery with Probabilistic Expert Advice: \\Finite Time Analysis and Macroscopic Optimality}
\author{
S{\'e}bastien Bubeck \\
Department of Operations Research and Financial Engineering, \\
Princeton University \\
{\tt sbubeck@princeton.edu} \\ \\
Damien Ernst \\
Department of Electrical Engineering and Computer Science \\
University of Li\`ege \\
{\tt dernst@ulg.ac.be}  \\ \\
 Aur\'elien Garivier \\
Institut de Math\'ematiques de Toulouse \\
Universit\'e Paul Sabatier \\ 
{\tt aurelien.garivier@math.univ-toulouse.fr}
}

\maketitle
\begin{abstract}%
We consider  an original problem that arises from the issue of security analysis of a power system and that we name optimal discovery with probabilistic expert advice. We address it with an algorithm based on the optimistic paradigm and on the Good-Turing missing mass estimator. We prove two different regret bounds on the performance of this algorithm under weak assumptions on the probabilistic experts. Under more restrictive hypotheses, we also prove a macroscopic optimality result, comparing the algorithm both with  an oracle strategy and with uniform sampling.
Finally, we provide numerical experiments illustrating these theoretical findings.
\end{abstract}

\section{Introduction}
In this paper we consider the following problem: Let $\X$ be a set, and $A \subset \X$ be a set of interesting elements in $\X$. One can access $\X$ only through requests to a finite set of probabilistic experts. More precisely, when one makes a request to the $i^{th}$ expert, the latter draws independently at random a point from a fixed probability distribution $P_i$ over $\X$. One is interested in discovering rapidly as many elements of $A$ as possible, by making sequential requests to the experts.

\subsection{Motivation}
The original motivation for this problem arises from the issue of real-time security analysis of a power system. This problem often amounts to identifying in a  set of  ‘credible’ contingencies those that may indeed endanger the security of the power system and perhaps lead to a  system collapse with catastrophic consequences (e.g., an entire region, country may be without electrical power for hours). Once those dangerous contingencies have been identified,   the system operators usually take preventive actions so as to ensure that they could mitigate their effect on the system in the likelihood they would occur. Note that usually, the dangerous contingencies are very rare with respect to the non dangerous ones.
A straightforward approach for tackling this security analysis problem is to simulate the power system dynamics for every credible contingency so as to identify those that are indeed dangerous.
Unfortunately, when the set of credible contingencies contains a large number of elements (say, there are more than $10^5$ ‘credible’ contingencies) such an approach may not possible anymore since the computational resources required to simulate every contingency may excess those that are usually available during the few (tens of) minutes available for the real-time security analysis. One is therefore left with the problem of identifying within this short time-frame a maximum number of dangerous contingencies rather than all of them.
The approach proposed in \citet{Fonteneau11phd} and \citet{FonteneauErnstDruetPanciaticiWehenkel10power} addresses this problem by building first very rapidly what could be described as a probability distribution $P$ over the set of credible contingencies that points with  significant probability to contingencies which are dangerous. Afterwards, this probability distribution is used to draw the contingencies to be analyzed through simulations. When the computational resources are exhausted, the approach outputs the contingencies found to be dangerous.
One of the main shortcoming of this approach is that usually $P$ points only with a significant probability to a few of the dangerous contingencies and not all of them. This in turn makes this probability distribution not more likely to generate after a few draws new dangerous contingencies than for example a uniform one. The dangerous contingencies to which $P$ points to with a significant probability depend however strongly on the set of (sometimes arbitrary) engineering choices that have been made for building it. One possible strategy to ensure that more dangerous contingencies can be identified within a limited budget of draws would therefore be to consider $K>1$ sets of engineering choices  to build $K$ different probability distributions $P_1$, $P_2$, $\ldots$, $P_K$ and to draw the contingencies from these $K$ distributions rather than only from a single one. This strategy raises however an important question to which this paper tries to answer: how should the distributions be selected  for being 
able to generate with a given number of draws a maximum number of dangerous contingencies?
We consider the specific case where the contingencies are sequentially drawn and where the distribution selected for generating a contingency at one instant can be based on the past distributions that have been selected, the contingencies that have been already drawn and the results of the security analyses (dangerous/non dangerous) for these contingencies.
This corresponds exactly to the optimal discovery problem with expert advice described above.
We believe that this framework has many other possible applications, such as for example web-based content access.

\subsection{Setting and Notation} \label{sec:settingnotation}
In this paper we restrict our attention to finite or countably infinite  sets $\X$. 
We denote by $K$ the number of experts. For each $i\in\{1,\dots,K\}$, we assume that $(X_{i,n})_{n\geq 1}$ are random variables with distribution $P_i$ such that the $(X_{i,n})_{i,n}$ are independent. Sequential discovery with probabilistic expert advice can be described as follows: at each time step $t\in\N^*$, one picks an index $I_t\in\{1,\dots, K\}$, and one observes $X_{I_t, n_{I_t,t}}$, where 
\[n_{i,t} = \sum_{s\leq t} \1{\{I_s = i\}}\;.\]
The goal is to choose the $(I_t)_{t\geq 1}$ so as to observe as many elements of $A$ as possible in a fixed horizon $t$, that is to maximize the number of interesting items found after $t$ requests
\begin{equation} \label{eq:numberfound}
F(t) = \sum_{x \in A} \1\bigg\{x \in \{X_{1,1},\hdots,X_{1,n_{1,t}}, \hdots, X_{K,1},\hdots,X_{K,n_{K,t}}\}\bigg\}.
\end{equation} 
Note in particular that it is of no interest to observe twice the same same element of $A$. The index $I_{t+1}$ may be chosen according to past observations: it is a (possibly randomized) function of $(I_1, X_{I_1,1}, \dots, I_{t}, X_{I_t, n_{I_t,t}})$. 

An easier quantity to analyze than the number of interesting items found $F(t)$ is the waiting time $T(\lambda)$, $\lambda \in (0,1)$, which is the time at which the strategy has a missing mass of interesting items smaller than $\lambda$ on every experts, that is
\begin{equation} \label{eq:waitingtime}
T(\lambda) = \inf \bigg\{t : \forall i \in \{1, \hdots, K\}, P_i(A \setminus \{X_{1,1},\hdots,X_{1,n_{1,t}}, \hdots, X_{K,1},\hdots,X_{K,n_{K,t}}\}) \leq \lambda \bigg\}.
\end{equation}

While we shall derive a general strategy that can be used without any assumption on the probabilistic experts, for the mathematical analysis of the waiting time $T(\lambda)$ we make the following assumption:
\begin{enumerate}
\item[(i)] non-intersecting supports: $A \cap \supp(P_i) \cap \supp(P_j) = \emptyset$ for $i \neq j$.
\end{enumerate}
Furthermore we will also derive some results under the following more restrictive assumptions:
\begin{enumerate}
\item[(ii)] finite supports with the same cardinality: $|\supp(P_i)| = N, \forall i \in \{1,\hdots,K\}$,
\item[(iii)] uniform distributions: $P_i(x) = \frac{1}{N}, \forall x \in \supp(P_i), \forall i \in \{1,\hdots,K\}$.
\end{enumerate}

\subsection{Contribution and Content of the Paper}
This paper contains the description of a generic algorithm for the optimal discovery problem with probabilistic expert advice, and 
a theoretical analysis of its properties.
In Section~\ref{sec:goodUCB}, we first depict our strategy, termed Good-UCB. This algorithm relies on the \emph{optimistic paradigm}, which led to the UCB (Upper Confidence Bound) algorithm for multi-armed bandits, see \citet{AuerEtAl02FiniteTime} and \citet{GarivierCappe11KLUCB}. It relies also on a finite-time analysis of the Good-Turing estimator for the missing mass. We also derive in Section~\ref{sec:goodUCB} two different regret bounds under the non-intersecting assumption (i): we first show that $F^{UCB}(t)$ (the number of interesting items found by Good-UCB) is larger than $F^*(t)$ (the number of interesting items found by an oracle strategy), up to a term of order $\sqrt{K t \log(t)}$. We argue that such a bound does not capture all the fine properties of Good-UCB: indeed, on the contrary to the multi-armed bandit problem, here the regret $F^*(t) - F(t)$ remains bounded for any reasonable strategy. This can be understood as a \emph{restoring property} of the game: if a policy makes a sub-optimal choice at 
some given time $t$, then in the future it will have better opportunities than the optimal policy. This key feature of our problem prevents the regret from growing too much.
To analyze this phenomenon, we complete our first bound by a second regret analysis---the main result of the paper---which states roughly that with high probability, $T_{UCB}(\lambda)$ (the waiting time for the strategy Good-UCB) is \emph{uniformly} (in $\lambda$) smaller than $T^*(\lambda')$ (the smallest possible waiting time), for some $\lambda'$ close to $\lambda$ and up to a small additional term, see Theorem \ref{th:nlregret} for a more precise statement. We emphasize that these regret bounds are both completely distribution-free and explicit. 

In Section~\ref{sec:macro} we propose to investigate the behavior of Good-UCB in a {\em macroscopic limit} sense, that is we make assumptions [(i), (ii), (iii)] and we consider the limit when the size of the set $\X$ grows to infinity while maintaining a constant proportion of interesting items. In this scenario we show that Good-UCB is macroscopically optimal, in the sense that the normalized waiting time of Good-UCB tends to the normalized smallest possible waiting time. We also derive a formula for this latter quantity and we show that it is equal to $\sum_{i: q_i>\lambda} \log\frac{q_i}{\lambda}$, where $q_i$ is the limiting proportion of interesting items on expert $i$. This macroscopic limit also allows to easily assess the performance of different strategies, and we show that for example the normalized waiting time of uniform sampling tends to $K \max_{1 \leq i \leq K} \log\frac{q_i}{\lambda}$, which proves that this strategy is macroscopically suboptimal, unless all experts have the same number of 
interesting items.

Finally, Section~\ref{sec:simus} reports experimental results that show that
the Good-UCB  algorithm performs very  well, even in a setting where
assumptions (i), (ii) and  (iii) are not satisfied.
The appendix contains some technical proofs, together with a more detailed discussion on oracle strategies in the macroscopic limit and on the relation between the waiting time $T$ defined in \eqref{eq:waitingtime} and the number of items found $F$ defined in \eqref{eq:numberfound}, proving in particular that optimality in terms of waiting time is equivalent to optimality in terms of number of items found.

\section{The Good-UCB Algorithm}~\label{sec:goodUCB}
We describe here the Good-UCB strategy. This algorithm is a sequential method estimating at time $t$, for each expert $i\in\{1,\dots,K\}$, the total probability of the interesting items that remain to be discovered through requests to expert $i$. This estimation is done by adapting the so-called Good-Turing estimator for the missing mass. Then, instead of simply using the distribution with highest estimated missing mass, which proves hazardous, we make use of the \emph{optimistic paradigm}---see  \citet[Chapter~2, and references therein]{BC12}---a heuristic principle well-known in reinforcement learning, which entails to prefer using an \emph{upper-confidence bound} (UCB) of the missing mass instead. At a given time step, the Good-UCB algorithm simply makes a request to the expert with highest upper-confidence bound on the missing mass at this time step.

We start in Section \ref{sec:est} with the Good-Turing estimator and a brief study of its concentration properties. Then we describe precisely the Good-UCB strategy in Section \ref{sec:GoodTuringEstimator}. Next we proceed to the theoretical analysis of Good-UCB and we start in Section \ref{sec:oclnew} where we describe an oracle strategy (that we shall use as a comparator) that we prove to be optimal under assumption (i). In Section \ref{sec:classical} we show that one can obtain a standard regret bound of order $\sqrt{t}$ when one compares the number of items $F^{UCB}(t)$ found by Good-UCB to the number of items $F^*(t)$ found by the oracle. This bound is not completely satisfactory (as we explain in Section \ref{sec:classical}), and our main result---a 'non-linear' regret bound---is proved in Section \ref{sec:nonlinear}.

\subsection{Estimating the Missing Mass} \label{sec:est}
Our algorithm relies on an estimation at each step of the probability of obtaining a new interesting item by making a request to a given expert. A similar issue was addressed by I.~Good and A.~Turing  as part of their efforts to crack German ciphers for the Enigma machine during World War II. In this subsection, we describe a version of the Good-Turing estimator adapted to our problem.
Let $\Omega$ be a discrete set, and let $A$ be a subset of interesting elements of $\Omega$.
Assume that $X_1,\dots,X_n$ are elements of $\Omega$ drawn independently under the same distribution $P$, and define for every $x\in\Omega$:
\[O_n(x) = \sum_{m=1}^n \1\{X_m=x\},\quad Z_n(x) = \1\{O_n(x) = 0\},\quad U_n(x) = \1\{O_n(x) = 1\}\;. \]
Let $R_n = \sum_{x\in A} Z_n(x)P(x)$ denote the missing mass of the interesting items, and let $U_n = \sum_{x\in A} U_n(x)$ be the number of elements of $A$ that have been seen exactly once (in linguistics, they are often called \emph{hapaxes}).
The idea of the Good-Turing estimator---see \citet{Good53GT}, see also \citet[and references therein]{McallesterSchapire00GoodTuring,orlitsky2003always}---is to estimate the (random) ``missing mass'' $R_n$, which is the total probability of all the interesting items that do not occur in the sample $X_1,\dots,X_n$, by the ``fraction of hapaxes $\Rh_n = U_n/n$.
This estimator is well-known in linguistics, for instance in order to estimate the number of words in some language, see~\citet{Gale95GT}.
We shall use the following tight bound on the estimation error. We emphasize the fact that the following bound holds true \emph{independently of the underlying distribution $P$}.
\begin{proposition}\label{prop:inegGT}
With probability at least $1-\delta$,
 \[ \Rh_n -\frac{1}{n} - (1+\sqrt{2}) \sqrt{\frac{\log(4/\delta)}{n}} \leq R_n  \leq  \Rh_n+ (1+\sqrt{2}) \sqrt{\frac{\log(4/\delta)}{n}}\;. \]
\end{proposition}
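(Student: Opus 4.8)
The plan is to control the estimation error $\Rh_n - R_n = U_n/n - R_n$ by a concentration-of-measure argument applied separately to $U_n$ and to $R_n$ around their common expectation, together with a computation showing that $\E[U_n/n]$ and $\E[R_n]$ differ by at most $1/n$. First I would write down the expectations. For a fixed $x \in A$ with $p = P(x)$, one has $\E[Z_n(x)] = (1-p)^n$ and $\E[U_n(x)] = n p (1-p)^{n-1}$, so that
\[
\E[R_n] = \sum_{x \in A} p (1-p)^n, \qquad \E[U_n/n] = \sum_{x \in A} p (1-p)^{n-1}.
\]
Comparing termwise, $0 \leq \E[U_n/n] - \E[R_n] = \sum_{x\in A} p (1-p)^{n-1}\big(1 - (1-p)\big) = \sum_{x\in A} p^2 (1-p)^{n-1}$. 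The elementary bound $p(1-p)^{n-1} \leq 1/n$ (maximized at $p = 1/n$) gives $\sum_x p^2(1-p)^{n-1} \leq \tfrac{1}{n}\sum_x p \leq \tfrac{1}{n}$, so the two expectations differ by at most $1/n$. This explains the asymmetric $1/n$ offset appearing only on the lower side of the stated inequality.

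Next I would obtain the two deviation bounds. Both $U_n = \sum_{x\in A} U_n(x)$ and $nR_n = \sum_{m=1}^n \sum_{x\in A} Z_n(x) P(x)$ should concentrate around their means with Gaussian tails of width $\sqrt{n}$, but the random variables involved are not independent (the $O_n(x)$ across different $x$ are negatively correlated through the multinomial structure). The natural tool is McDiarmid's bounded-differences inequality applied to the function of the i.i.d.\ sample $(X_1,\dots,X_n)$. The key observation is that changing a single coordinate $X_m$ can flip at most two of the indicator contributions (it removes the old value from and adds the new value to the occupancy counts), so $U_n$ has bounded differences of order a constant, yielding $\P(|U_n - \E U_n| \geq \varepsilon) \leq 2\exp(-c\varepsilon^2/n)$ and similarly $R_n$ is $P$-weighted and hence even smoother. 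Choosing $\varepsilon$ of order $\sqrt{n \log(4/\delta)}$ for each and dividing $U_n$ by $n$ produces two deviation terms each of order $\sqrt{\log(4/\delta)/n}$.

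Finally I would assemble the pieces via a union bound over the two events (two tail failures of probability $\delta/2$ each, hence the $4/\delta$ inside the logarithm after accounting for two-sided bounds), combining the $U_n/n$ deviation, the $R_n$ deviation, and the $1/n$ mean gap. The constant $1 + \sqrt{2}$ should emerge from optimizing the bounded-difference constants: one term contributing the $1$ and the other contributing the $\sqrt{2}$, added because the $U_n/n$ fluctuation and the $R_n$ fluctuation are bounded separately and then summed. The main obstacle I anticipate is getting the sharp constant $1+\sqrt{2}$ rather than a crude one; this requires carefully identifying the correct bounded-difference constant for each of $U_n$ and $R_n$ (the asymmetry between the unweighted count $U_n$ and the probability-weighted mass $R_n$ is what breaks the symmetry of the constants), and possibly using a sharper self-bounding or Bernstein-type argument in place of plain McDiarmid so that the variance proxy matches the $\sqrt{2}$. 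The rest of the argument is routine once the two concentration inequalities and the $1/n$ bias computation are in hand.
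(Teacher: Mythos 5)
Your bias computation ($\E \Rh_n - \E R_n = \sum_{x\in A} P(x)^2(1-P(x))^{n-1} \in [0,1/n]$), the union-bound architecture with $\delta/2$ per event yielding the $\log(4/\delta)$, and the attribution of the $\sqrt{2}$ to the $\Rh_n$ fluctuation and the $1$ to the $R_n$ fluctuation all match the paper. McDiarmid applied to $\Rh_n = U_n/n$ is indeed the right tool for that half: changing one observation alters the hapax count by at most $2$, so the bounded-difference constant is $2/n$ and one gets $|\Rh_n - \E\Rh_n| \leq \sqrt{2\log(2/\delta)/n}$ with probability $1-\delta$.

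The genuine gap is the concentration of $R_n$. Your claim that $R_n$ is ``$P$-weighted and hence even smoother'' is backwards. Replacing a single observation $X_m = x$ by $x'$ can flip $Z_n(x)$ from $0$ to $1$ and $Z_n(x')$ from $1$ to $0$, so the bounded-difference constant of $R_n = \sum_{x\in A} Z_n(x)P(x)$ is $\max_x P(x)$ --- not $O(1/n)$ and not controlled by any distribution-free quantity. Plain McDiarmid then gives a deviation of order $\max_x P(x)\sqrt{n\log(1/\delta)}$, which is vacuous for a fixed distribution as $n$ grows (it diverges while $R_n \leq 1$). Distribution-free concentration of the missing mass at rate $\sqrt{\log(2/\delta)/n}$ is a genuinely nontrivial result: the paper does not prove it but imports it from Theorems 10 and 16 of McAllester and Ortiz (2003), whose arguments rest on the negative association of the indicators $(Z_n(x))_x$ and a Chernoff/Bernstein-type analysis exploiting that an item with large $P(x)$ contributes a large increment only on the event $\{O_n(x)=0\}$, which has probability $(1-P(x))^n$. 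Your closing remark about ``a sharper self-bounding or Bernstein-type argument'' gestures in the right direction, but the proposal as written relies on a McDiarmid step for $R_n$ that fails, and does not supply the idea that actually closes it.
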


\begin{proof}
For self-containment, we first show that $\E R_n - \E \Rh_n \in \left[-\frac{1}{n},0\right]$; this result is well known, see for example Theorem 1 in \citet{McallesterSchapire00GoodTuring}:
\begin{align*}
\E R_n - \E \Rh_n  &= \sum_{x\in A} \left[P(x)\left(1-P(x)\right)^n - \frac{1}{n}\times nP(x)\left(1-P(x)\right)^{n-1}\right]\\
 & = -\frac{1}{n} \sum_{x\in A} P(x) \times nP(x)\left(1-P(x)\right)^{n-1} \\
 & = -\frac{1}{n}\E\left[ \sum_{x\in A}P(x) U_n(x)\right]\in \left[-\frac{1}{n},0\right]\;.
\end{align*}
Next we apply the inequality of \cite{Mcdiarmid89boundedDiff} to $\Rh_n$ as follows. The random variable $\Rh_n$ is a function of the independent observations $X_1,\dots,X_n$ such that, denoting $\Rh_n = f(X_1,\dots,X_n)$, modifying just one observation has limited impact: $\forall l\in\{1,\dots,n\}, \forall (x_1,\dots,x_n,x'_l)\in\Omega^{n+1}$,
\[
\left|f(x_1,\dots,x_n) - f(x_1,\dots,x_{l-1},x'_l,x_{l+1}, \dots,x_n)\right| \leq \frac{2}{n}.
\]
Thus one gets that, with probability at least $1-\delta$, 
\[\left|\Rh_n - \E[\Rh_n]\right| \leq \sqrt{\frac{2 \log(2/\delta)}{n}}\;.\]
Finally we extract the following result from Theorem 10 and Theorem 16 in~\citet{McAllesterOrtiz03concentration}: with probability at least $1-\delta$,
\[\left|R_n - \E[R_n]\right| \leq \sqrt{\frac{\log(2/\delta)}{n}}\;.\]
which concludes the proof.
\end{proof}

\subsection{The Good-UCB Algorithm}\label{sec:GoodTuringEstimator}
Following the example of the well-known Upper-Confidence Bound procedure for multi-armed bandit problems, we propose Algorithm~\ref{algo:GUCB}, which we call \emph{Good-UCB} in reference to the estimator it relies on.
For each arm $i\in\{1,\dots,K\}$, the index at time $t$ of Good-UCB corresponds to the estimate 
\[\Rh_{i, n_{i,t-1}} = \frac{1}{n_{i,t-1}} \sum_{x \in A} \1\left\{\sum_{s=1}^{n_{i,t-1}} \1\{X_{i,s} = x\} = 1 \; \text{and} \; \sum_{j=1}^K \sum_{s=1}^{n_{j,t-1}} \1\{X_{j,s} = x\} =1 \right\}\]
 of the missing mass 
\begin{equation} \label{eq:seb1}
\sum_{x\in A \setminus \left\{X_{I_{1}, n_{I_1, 1}}, \dots, X_{I_{t-1}, n_{I_{t-1}, t-1}}\right\}} P_i(x)
\end{equation}
 inflated by a confidence bonus of order $\sqrt{\log(t)/n_{i,t-1}}$.
Good-UCB relies on a tuning parameter $C$ which is discussed below.

\begin{algorithm}
\caption{Good-UCB}
\label{algo:GUCB}
\begin{algorithmic}[1]
\STATE For $1\leq t\leq K$ choose $I_t = t$.
\FOR {$t \geq K+1$}
\STATE Choose $I_t =\argmax_{1\leq i\leq K} \left\{\Rh_{i_,n_{i,t-1}} + C \sqrt{\frac{\log{(4 t)}}{n_{i,t-1}}} \right\}$
\STATE Observe $X_t$ distributed as $P_{I_t}$ and update the missing mass estimates accordingly
\ENDFOR
\end{algorithmic}
\end{algorithm}

The Good-UCB algorithm is designed to work without any assumption on the probabilistic experts. However for the analysis we shall make the non-intersecting supports assumption (i). Indeed without this assumption the missing mass of a given expert $i$ depends explicitly on the outcomes of {\em all} requests (and not only requests to expert $i$), see \eqref{eq:seb1}, which makes the analysis significantly more difficult. On the other hand under assumption (i) one can define the missing mass of expert $i$ after $n$ pulls without any reference to the other arms, and it takes the following simple form:
\begin{equation} \label{eq:seb2}
R_{i,n} = \sum_{x\in A \setminus \{X_{i,1}, \dots, X_{i, n}\}} P_i(x)\;.
\end{equation}
Note that while the theoretical analysis will be carried out under assumption (i), we show in Section~\ref{sec:simus} that Good-UCB performs well in practice even when this assumption is not met.

\subsection{The Closed-loop Oracle Policy} \label{sec:oclnew}
In this section we define a policy that we shall use as a benchmark to study the properties of Good-UCB. We assume hereafter that assumption (i) is satisfied (in particular we shall use the notation defined in \eqref{eq:seb2}). The Oracle Closed-Loop policy, denoted OCL in the following, makes a request at time $t$ to the expert
\[I^*_t = \argmax_{1\leq i\leq K} R_{i, n^*_{i, t-1}}\;, \;\; \hbox{where} \;\; n^*_{i,t} = \sum_{s=1}^t \1\{I^*_s=i\}\;.\]
In words, OCL greedily selects the expert that maximizes the probability of finding a new interesting item. The next lemma shows that this greedy procedure is in fact optimal (in expectation) under assumption (i). The proof is given in the appendix. 

For any given policy $\pi$, let $F^{\pi}(t)$ be the number of items found at time $t$ with $\pi$, $I_t^{\pi}$ be the expert chosen by $\pi$ at time $t$, and $n_{i,t}^{\pi} = \sum_{s=1}^t \1\{I_s^{\pi}=i\}$ be the number of requests made by $\pi$ to expert $i$ up to time $t$. 

\begin{lemma}\label{lem:OCLoptimal}
Let $\pi$ be an arbitrary policy, and $t \geq 1$. Then
\[
\E F^{\pi}(t) \leq \E F^*(t) .
\]
\end{lemma}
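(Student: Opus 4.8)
The plan is to reduce the statement to a deterministic, separable, concave maximization problem, exploiting that under assumption~(i) the marginal gain of a request is exactly the current missing mass of the queried expert. First I would establish the identity
\[
\E F^{\pi}(t) = \E\left[\sum_{s=1}^{t} R_{I_s^{\pi},\, n_{I_s^{\pi}, s-1}^{\pi}}\right].
\]
To see this, write $F^{\pi}(t) = \sum_{s=1}^t \Delta_s$, where $\Delta_s \in \{0,1\}$ indicates that a previously unseen element of $A$ is discovered at step $s$, and condition on the history $\mathcal{F}_{s-1}$ generated by the first $s-1$ requests and observations. Given $\mathcal{F}_{s-1}$ (and the possibly randomized choice $I_s^{\pi}$), the count $n^{\pi}_{I_s^{\pi},s-1}$ is determined, and the fresh draw from expert $i = I_s^{\pi}$ is distributed as $P_i$ independently of the past; because the supports of the $P_i$ on $A$ are disjoint (assumption~(i)), ``unseen'' can only refer to items not yet drawn from expert $i$ itself, so $\E[\Delta_s \mid \mathcal{F}_{s-1}] = R_{i, n^{\pi}_{i,s-1}}$. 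Taking expectations and summing over $s$ gives the identity, and the same computation applies verbatim to OCL.

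Next I would rewrite the cumulative sum pathwise. The crucial modeling point is that $X_{i,n}$ is the $n$-th sample from expert $i$ irrespective of the policy, so $R_{i,n}$ as defined in~\eqref{eq:seb2} is a policy-independent random variable, and it is nonincreasing in $n$ for every realization (enlarging the drawn set can only remove terms from the sum). Grouping the terms by expert, each request to expert $i$ collects $R_{i,n}$ for the current value of the count $n$, so if $\pi$ makes $m_i = n^{\pi}_{i,t}$ requests to expert $i$, with $\sum_i m_i = t$, then
\[
\sum_{s=1}^{t} R_{I_s^{\pi},\, n_{I_s^{\pi}, s-1}^{\pi}} = \sum_{i=1}^{K} H_i(m_i), \qquad H_i(m) := \sum_{n=0}^{m-1} R_{i,n}.
\]
Since the increments $H_i(m+1) - H_i(m) = R_{i,m}$ are nonincreasing in $m$, each $H_i$ is concave, and the functions $H_i$ do not depend on $\pi$.

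It then remains to invoke the optimality of the greedy rule for maximizing a sum of separable concave functions under a fixed total budget. Fix a realization: the attainable marginal gains are exactly the values $\{R_{i,n} : 1 \leq i \leq K, \ n \geq 0\}$, and by concavity any allocation uses, for each $i$, a prefix $R_{i,0} \geq \cdots \geq R_{i,m_i-1}$ of its gains. Hence $\sum_i H_i(m_i)$ is maximized over $\{\sum_i m_i = t\}$ precisely by collecting the $t$ largest marginal gains, which — thanks to the within-expert monotonicity — automatically form prefixes and are therefore feasible. The OCL policy selects at each step the expert of currently largest missing mass $R_{i, n^*_{i, t-1}}$, i.e.\ it is exactly this greedy procedure, so it attains the pathwise maximum $\max_{\sum_i m_i = t} \sum_i H_i(m_i)$. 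Combining with the identity of the first step, for every policy $\pi$ and every realization one has $\sum_i H_i(n^{\pi}_{i,t}) \leq \sum_i H_i(n^{*}_{i,t})$, and taking expectations yields $\E F^{\pi}(t) \leq \E F^*(t)$.

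I expect the main obstacle to be the greedy/exchange step: one must argue carefully that greedily collecting the largest current marginal gain achieves the global optimum of the separable concave program, in particular handling ties and verifying the prefix-feasibility of the set of $t$ largest gains. Everything else is bookkeeping resting on assumption~(i), which is precisely what guarantees both that the per-step conditional gain equals the missing mass and that $R_{i,n}$ (hence $H_i$) is policy-independent; without disjoint supports the missing mass~\eqref{eq:seb1} would couple all experts and this clean reduction would fail.
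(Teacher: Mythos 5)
Your proof is correct, and it takes a genuinely different route from the paper's. The paper argues by induction on $t$: it introduces the intermediate policy that copies $\pi$'s first move and then follows OCL, applies the induction hypothesis to bound $\E F^{\pi}(t)$ by the value of this intermediate policy, and then compares the latter to OCL via a coupling through the stopping time $\tau=\min\{s: I^*_s=I^{\pi}_1\}$, treating the events $\{\tau\leq t\}$ and $\{\tau>t\}$ separately. You instead make global use of the two structural facts that the paper only exploits locally: the tower-property identity $\E F^{\pi}(t)=\E\bigl[\sum_{s\leq t}R_{I^{\pi}_s,n^{\pi}_{I^{\pi}_s,s-1}}\bigr]$ (valid because, under assumption (i), a fresh draw from expert $i$ can only collide with earlier draws from expert $i$ itself), and the policy-independence plus pathwise monotonicity of $n\mapsto R_{i,n}$. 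This turns the problem into maximizing the separable sum $\sum_i H_i(m_i)$ of concave functions over $\sum_i m_i=t$, for which OCL is exactly the greedy rule and hence a \emph{pathwise} maximizer; the expectation inequality follows a fortiori. Your argument is arguably cleaner and yields a slightly stronger conclusion (optimality of OCL's allocation realization by realization at the level of the accumulated missing masses, not just in expectation), at the price of having to justify the greedy-exchange step for the concave program — which is standard and which you correctly flag, including the tie-breaking and prefix-feasibility points. Both proofs hinge on assumption (i) in the same two places: the per-step conditional gain identity and the decoupling of $R_{i,n}$ across experts.
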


The optimality of OCL crucially relies on assumption (i). Consider for example the following problem instance: $\X =\{1,2,3,4\}$, $A=\{1,2,3\}$, $K=3$, $\nu_1 = \delta_1$, $\nu_2 = \frac{2}{5} (\delta_1 + \delta_2) + \frac{1}{5}\delta_4$, and $\nu_3 = \frac{2}{5} (\delta_1 + \delta_3) + \frac{1}{5}\delta_4$ and $t=2$. 
In this case OCL first chooses expert 1, and then (say) expert 2: this yields $F^*(2)=1+2/5=7/5$. But the strategy $\pi$ consisting in choosing first expert 2, and then expert 3, is readily seen to have expected return $\E F^{\pi}(2) = 2/5\times(1+ 2/5) + 2/5\times(1+4/5) + 1/5 \times 4/5 = 36/25>7/5$.

The next lemma is a technical result on OCL that shall prove to be very useful to derive a standard regret bound for Good-UCB. Its proof is also given in the appendix. 

\begin{lemma}\label{lem:firstBound}
Let $\pi$ be an arbitrary policy, and for $t \geq 1$ let 
\[
\bar{I}_t = \argmax_{1\leq i\leq K}R_{i, n^{\pi}_{i,t-1}}\;.\]
Then
\[
\E F^*(t) \leq \sum_{s=1}^t \E R_{\bar{I}_s, n^{\pi}_{\bar{I}_s,s-1}} .
\]
\end{lemma}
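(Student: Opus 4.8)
The plan is to reduce the statement to a deterministic, combinatorial inequality about greedy allocation, and only then take expectations. First I would record the elementary one-step identity that holds under assumption~(i): for any policy $\pi$, writing $\mathcal{F}_{s-1}$ for the history available before the $s$-th request, the conditional probability of discovering a new interesting item at step $s$ is exactly the current missing mass of the chosen expert, so that $\E[F^{\pi}(s) - F^{\pi}(s-1) \mid \mathcal{F}_{s-1}] = R_{I^{\pi}_s,\, n^{\pi}_{I^{\pi}_s, s-1}}$. This uses crucially that, under~(i), a fresh draw from $P_{I^{\pi}_s}$ can only reproduce an item already counted through previous requests to the \emph{same} expert $I^{\pi}_s$ (see \eqref{eq:seb2}), together with the independence of the draws. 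Summing over $s$ and specializing to OCL, whose choice $I^*_s$ maximizes $R_{i, n^*_{i,s-1}}$, gives $\E F^*(t) = \sum_{s=1}^t \E\big[\max_{1 \le i \le K} R_{i, n^*_{i,s-1}}\big]$. On the other side, the definition of $\bar I_s$ yields $R_{\bar I_s, n^{\pi}_{\bar I_s, s-1}} = \max_{1\le i\le K} R_{i, n^{\pi}_{i,s-1}}$, so the lemma is equivalent to the inequality $\sum_{s=1}^t \E[\max_i R_{i,n^*_{i,s-1}}] \le \sum_{s=1}^t \E[\max_i R_{i,n^{\pi}_{i,s-1}}]$.

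Next I would prove this inequality \emph{pathwise}, for a fixed realization $\omega$ of all the draws $(X_{i,n})$. The key structural fact is that for each expert $i$ the sequence $n \mapsto R_{i,n}(\omega)$ is non-increasing, since removing more observed points can only shrink the missing mass. Regard the doubly-indexed family $\{R_{i,n}(\omega): 1\le i\le K,\ n\ge 0\}$ as a pool of values, and let $v_{(1)} \ge v_{(2)} \ge \cdots$ be this pool sorted in non-increasing order (ties broken arbitrarily). OCL is exactly the greedy merge of the $K$ sorted sequences $R_{i,\cdot}(\omega)$: at each step it takes the largest value on the current frontier, and because reaching $R_{i,n}$ forces it to have first taken the larger values $R_{i,0},\dots,R_{i,n-1}$, a standard exchange argument shows that after $t$ steps OCL has collected precisely the $t$ largest pool values. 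Hence $\sum_{s=1}^t \max_i R_{i,n^*_{i,s-1}}(\omega) = \sum_{s=1}^t v_{(s)}$.

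It remains to bound the corresponding quantity for $\pi$ from below by the same top-$t$ sum. Here I would use a pigeonhole argument: before step $s$ the policy $\pi$ has made $\sum_i n^{\pi}_{i,s-1} = s-1$ requests, so it has consumed only $s-1$ of the pool values, and therefore at least one of the $s$ largest pool values $v_{(1)},\dots,v_{(s)}$ is still unconsumed. If that value is $R_{i_0, n_0}(\omega) = v_{(j)}$ with $j \le s$, then being unconsumed forces $n_0 \ge n^{\pi}_{i_0,s-1}$, and monotonicity gives $\max_i R_{i,n^{\pi}_{i,s-1}}(\omega) \ge R_{i_0, n^{\pi}_{i_0,s-1}}(\omega) \ge R_{i_0,n_0}(\omega) = v_{(j)} \ge v_{(s)}$. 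Summing over $s=1,\dots,t$ yields $\sum_{s=1}^t \max_i R_{i,n^{\pi}_{i,s-1}}(\omega) \ge \sum_{s=1}^t v_{(s)}$, which is exactly the OCL value computed above. Taking expectations over $\omega$ and combining with the first paragraph then finishes the proof.

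I expect the only delicate point to be the combinatorial core of the middle paragraph: making precise that the greedy OCL trajectory collects exactly the top-$t$ pool values, and checking that ties and the possibly infinite supply of zero missing masses cause no trouble (they do not, since only the $t$ largest values enter the sum). Once that and the pigeonhole observation are in place, the remainder is bookkeeping. Note that the argument never uses \emph{how} $\pi$ selects its experts, only that its cumulative counts satisfy $\sum_i n^{\pi}_{i,s-1}=s-1$ at each step $s$, which is precisely why the bound is valid for an arbitrary, possibly randomized, policy $\pi$.
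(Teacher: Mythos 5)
Your proof is correct, but it takes a genuinely different route from the paper's. You first convert the expected number of discoveries into a sum of conditional missing masses via the identity $\E[F^{\pi}(s)-F^{\pi}(s-1)\mid \mathcal{F}_{s-1}] = R_{I_s^{\pi}, n^{\pi}_{I_s^{\pi},s-1}}$ (valid under (i) because an element of $A$ found through another expert cannot lie in $\supp(P_{I_s^{\pi}})$), and then establish the resulting inequality \emph{pathwise} as a rearrangement statement: OCL is the greedy merge of the $K$ non-increasing lists $(R_{i,n})_{n\geq 0}$, hence collects exactly the $t$ largest pool values $v_{(1)},\dots,v_{(t)}$, while any schedule consuming one list element per step has, by pigeonhole, a frontier maximum at step $s$ at least $v_{(s)}$. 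The paper instead proves the telescoping bound $\E[F^*(t\mid H_s^{\pi})-F^*(t\mid H_{s+1}^{\pi})] \leq \E R_{\bar{I}_s, n^{\pi}_{\bar{I}_s,s-1}}$ by a probabilistic exchange argument built on the stopping time $\tau_s$, the first time OCL restarted from $H_s^{\pi}$ selects $I_s^{\pi}$: on $\{\tau_s\leq t\}$ the two trajectories coincide, and on $\{\tau_s>t\}$ the surplus term is controlled by the maximal missing mass. Your combinatorial argument buys a stronger, almost-sure inequality $\sum_{s=1}^t \max_i R_{i,n^*_{i,s-1}} \leq \sum_{s=1}^t \max_i R_{i,n^{\pi}_{i,s-1}}$, makes explicit that only the counting constraint $\sum_i n^{\pi}_{i,s-1}=s-1$ is used, and even yields Lemma~\ref{lem:OCLoptimal} as a byproduct (the $t$ pool elements consumed by $\pi$ are distinct, so their sum is at most $\sum_{s\leq t} v_{(s)}$); the paper's argument trades this for a shorter conditional-expectation computation that avoids formalizing the merge-of-sorted-lists lemma, which is indeed the one point in your write-up that still needs to be spelled out carefully (induction on the invariant that the multiset of values consumed by the greedy merge after $s$ steps is $\{v_{(1)},\dots,v_{(s)}\}$, with ties handled at the level of values rather than positions).
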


\subsection{Classical Analysis of the Good-UCB Algorithm} \label{sec:classical}
We provide here an upper bound on the expectation of $F^*(t)-F^{UCB}(t)$ which is completely \linebreak[4] distribution-free, and which depends only on the horizon $t$ and on the number $K$ of experts. This bound grows like $O(\sqrt{Kt\log(t)})$, which is a usual rate for a bandit problem. Indeed,  thanks to Lemma~\ref{lem:firstBound}, the analysis presented in this section follows the lines of classical regret analyses, see for instance~\citet[and the references therein]{BC12}.
Below, we discuss some differences between the discovery problem considered here and bandit problems, and we provide an alternative analysis of the Good-UCB algorithm which is more suited to understand its long-term behavior. 

\begin{theorem}\label{th:firstBound}
For any $t \geq 1$, under assumption (i), Good-UCB (with constant $C=(1+\sqrt{2})\sqrt{3}$) satisfies
\[\E\left[F^*(t)-F^{UCB}(t)\right] \leq 17 \sqrt{K t \log(t)}  + 20\sqrt{K t} + K+ K\log(t/K) \;.\]
\end{theorem}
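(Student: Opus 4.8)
The plan is to bound $\E[F^*(t) - F^{UCB}(t)]$ by first invoking Lemma~\ref{lem:firstBound} with the policy $\pi$ taken to be Good-UCB itself, which gives
\[
\E F^*(t) \leq \sum_{s=1}^t \E R_{\bar{I}_s, n^{UCB}_{\bar{I}_s, s-1}}\;,
\]
where $\bar{I}_s = \argmax_i R_{i, n^{UCB}_{i,s-1}}$ is the expert with the genuinely largest missing mass given the counts that Good-UCB has accumulated. Meanwhile, the expected one-step increment of $F^{UCB}$ at step $s$ is exactly the missing mass of the expert actually chosen, so $\E F^{UCB}(t) = \sum_{s=1}^t \E R_{I^{UCB}_s, n^{UCB}_{I^{UCB}_s, s-1}}$. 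Subtracting, the regret becomes a sum over time of the per-step \emph{instantaneous} regret $\E\bigl[R_{\bar{I}_s, n^{UCB}_{\bar{I}_s,s-1}} - R_{I^{UCB}_s, n^{UCB}_{I^{UCB}_s,s-1}}\bigr]$. This is the standard ``regret = sum of gaps'' decomposition, and it reduces the problem to controlling how often, and by how much, Good-UCB's index-maximizing choice differs from the true-missing-mass-maximizing choice.

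Next I would exploit the optimistic construction. Let $B_{i,s} = \Rh_{i,n_{i,s-1}} + C\sqrt{\log(4s)/n_{i,s-1}}$ denote the Good-UCB index. Proposition~\ref{prop:inegGT}, applied with $\delta$ chosen so that $\log(4/\delta) \approx 3\log(4s)$ (which fixes the constant $C = (1+\sqrt2)\sqrt3$), shows that on a high-probability ``good event'' two things hold simultaneously for every arm at every relevant time: the index upper-bounds the true missing mass, $R_{i,n_{i,s-1}} \leq B_{i,s}$, and the index is not too inflated, $B_{i,s} \leq R_{i,n_{i,s-1}} + 2C\sqrt{\log(4s)/n_{i,s-1}}$. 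On this event, optimism gives $R_{\bar{I}_s, \cdot} \leq B_{\bar{I}_s, s} \leq B_{I^{UCB}_s, s}$ (the last step because Good-UCB selects the maximal index), and then the upper confidence width applied to the chosen arm yields the per-step bound
\[
R_{\bar{I}_s, \cdot} - R_{I^{UCB}_s, \cdot} \leq 2C\sqrt{\frac{\log(4s)}{n_{I^{UCB}_s, s-1}}}\;.
\]
The complementary ``bad event'' contributes only a bounded term, handled by summing the failure probabilities $\delta_s$ of Proposition~\ref{prop:inegGT} across arms and steps, which I would arrange to converge (the $K$ term in the bound absorbs this).

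The crux is then summing $\sum_{s} \sqrt{\log(4s)/n_{I^{UCB}_s, s-1}}$ over the horizon. Here I would use the classical pull-counting trick: bound $\log(4s) \leq \log(4t)$, pull the logarithm out, and regroup the sum by arm, so that $\sum_{s: I^{UCB}_s = i} 1/\sqrt{n_{i,s-1}} = \sum_{n=1}^{n_{i,t}} 1/\sqrt{n} \leq 2\sqrt{n_{i,t}}$. Applying Cauchy--Schwarz to $\sum_i \sqrt{n_{i,t}}$ with the constraint $\sum_i n_{i,t} = t$ gives $\sum_i \sqrt{n_{i,t}} \leq \sqrt{K t}$, producing the leading $\sqrt{Kt\log t}$ term with the stated numerical constant; the initialization phase (the first $K$ forced pulls) and the $1/n$ bias term from Proposition~\ref{prop:inegGT} accumulate into the lower-order $K\log(t/K)$ and $\sqrt{Kt}$ terms. \textbf{} I expect the main obstacle to be bookkeeping the constants precisely --- tracking the $(1+\sqrt2)$ factor from Proposition~\ref{prop:inegGT} through the doubling in the confidence width, the $1/n$ Good--Turing bias, and the union bound over $(i,s)$ pairs --- so that all the pieces collapse into exactly $17\sqrt{Kt\log t} + 20\sqrt{Kt} + K + K\log(t/K)$, rather than any genuine conceptual difficulty, since the skeleton is a textbook optimistic regret argument made possible by Lemma~\ref{lem:firstBound}.
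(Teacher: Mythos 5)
Your proposal follows essentially the same route as the paper's proof: apply Lemma~\ref{lem:firstBound} with $\pi$ equal to Good-UCB, decompose the regret into per-step gaps $R_{\bar{I}_s,\cdot}-R_{I_s,\cdot}$, bound each gap on a high-probability event via optimism and the confidence width of Proposition~\ref{prop:inegGT} (including the $1/n$ bias), and then regroup the sum by arm and apply Jensen/Cauchy--Schwarz to get the $\sqrt{Kt\log t}$ leading term. The only differences are bookkeeping (the paper absorbs the bad event into a $\sqrt{Kt}$ term by restricting the union bound to $u>\sqrt{Kt}$, rather than into the $K$ term), so the argument is correct and matches the paper's.
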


\begin{proof}
Consider the event
\begin{multline*}
 \xi = \bigg\{\forall i \in \{1, \hdots, K\}, \forall u > \sqrt{K t}, \forall s\leq u,\\ \Rh_{i,s} -\frac{1}{s} - (1+\sqrt{2}) \sqrt{\frac{3 \log(4 u)}{s}} \leq R_{i,s}  \leq  \Rh_{i,s} + (1+\sqrt{2}) \sqrt{\frac{3 \log(4 u)}{s}}\bigg\} \;.
\end{multline*}
Using Proposition \ref{prop:inegGT} and an union bound, one obtains $\P(\xi) \geq 1 - \sqrt{\frac{K}{t}}$, and thus
\begin{equation*} \E\left[(F^*(t)-F^{UCB}(t)) (1-\1_{\xi}) \right] \leq t\sqrt{\frac{K}{t}} = \sqrt{Kt}\;.\label{eq:surAbarre}\end{equation*}
Let $u>\sqrt{Kt}$ and $\bar{I}_u = \argmax_{1\leq i\leq K}R_{i, n_{i,u-1}}$ be defined as in Lemma \ref{lem:firstBound}. On the event $\xi$, one obtains by definition of $I_u$ that
\begin{align*}
 R_{I_u, n_{I_u, u-1}} &\geq \Rh_{I_u, n_{I_u, u-1}} - \frac{1}{n_{I_u, u-1}} - (1+\sqrt{2}) \sqrt{\frac{3 \log(4 u)}{n_{I_u, u-1}}}\\
 &\geq \Rh_{I_u, n_{I_u, u-1}} + (1+\sqrt{2}) \sqrt{\frac{3 \log(4 u)}{n_{I_u, u-1}}}   - \frac{1}{n_{I_u, u-1}} - 2(1+\sqrt{2}) \sqrt{\frac{3 \log(4 u)}{n_{I_u, u-1}}}\\
 &\geq \Rh_{\bar{I}_u, n_{\bar{I}_u, u-1}} + (1+\sqrt{2}) \sqrt{\frac{3 \log(4 u)}{n_{I^*_u, u-1}}}   - \frac{1}{n_{I_u, u-1}} - 2(1+\sqrt{2}) \sqrt{\frac{3 \log(4 u)}{n_{I_u, u-1}}}\\
 &\geq R_{\bar{I}_u, n_{\bar{I}_u, u-1}}  - \frac{1}{n_{I_u, u-1}} - 2(1+\sqrt{2}) \sqrt{\frac{3 \log(4 u)}{n_{I_u, u-1}}}\;,\\
\end{align*}
and thus
\begin{align*}
R_{\bar{I}_u, n_{\bar{I}_u,u-1}} - R_{I_u, n_{I_u,u-1}}  & \leq  \frac{1}{n_{I_u, u-1}} + 2(1+\sqrt{2}) \sqrt{\frac{3 \log(4 u)}{n_{I_u, u-1}}}\\
& \leq   \frac{1}{n_{I_u, u-1}} + 2(1+\sqrt{2}) \sqrt{\frac{3 \log(4 t)}{n_{I_u, u-1}}}\;.
\end{align*}
Hence, using Lemma~\ref{lem:firstBound} and the above computation, one obtains
\begin{align}
 \E\left[F^*(t)-F^{UCB}(t)\right] &\leq \sqrt{Kt} + \E\left[\sum_{u=1}^t \frac{1}{n_{I_u, u-1}} + 2(1+\sqrt{2}) \sqrt{\frac{3 \log(4 t)}{n_{I_u, u-1}}} \right]\nonumber\\
& = \sqrt{Kt} +\E\left[\sum_{i=1}^K \sum_{s=1}^{n_{i,  t-1}} \frac{1}{s}+ 2(1+\sqrt{2}) \sqrt{\frac{3 \log(4 t)}{s}}\right]\nonumber\\
& \leq  \sqrt{Kt} + \E\left[\sum_{i=1}^K 1+ \log(n_{i,  t-1}) + 4(1+\sqrt{2}) \sqrt{3\log(4t)(n_{i, t-1}+1)} \right] \nonumber\\
&\leq \sqrt{Kt} + K + K\log(t/K)  +  4(1+\sqrt{2}) \sqrt{3 K t\log(4t)} \nonumber
\end{align}
by Jensen's inequality and the fact that $\sum_{i=1}^K n_{i, t-1} = t-1$.
\end{proof}
 
The cumulative regret bound provided in Theorem \ref{th:firstBound} has a similar flavor as well known regret bounds for the multi-armed bandit problem. Unfortunately here, such bounds, by suggesting that the regret increases with $t$, do not represent completely the behavior of Good-UCB: as we shall see in the experiments, the difference between $F^*(t)$ and $F^{UCB}(t)$ is bounded and tends to $0$ as $t$ tends to infinity (indeed, ultimately any reasonable strategy will find all the interesting items). Theorem~\ref{th:firstBound} provides insight into the properties of Good-UCB only for 'small' values of $t$.

The weakness of Theorem~\ref{th:firstBound} and its analysis is that, by using the upper bound of Lemma~\ref{lem:firstBound}, one ignores the \emph{restoring property} of the game: if a policy makes a sub-optimal choice at some given time $t$, then it will have better opportunities than OCL in the future, which prevents the regret from growing too much. In the next section we provide a completely different analysis of Good-UCB that takes advantage of this restoring property. This results in a non-standard regret bound, which differs from usual results in the multi-armed bandit literature.

Let us make one more comment about the bound of Theorem~\ref{th:firstBound}. On the contrary to the multi-armed bandit, the discovery problem discussed in this paper has a 'natural' time scale: if the horizon $t$ is too small, then even OCL will not be able to discover a significant proportion of interesting items, while if $t$ is too large then any reasonable strategy will find almost all interesting items. To go around this issue we find it more elegant to study the waiting time $T(\lambda)$ (see \eqref{eq:waitingtime}) which yields a sort of automatic normalization of the time scale.

\subsection{Time-uniform Analysis of the Good-UCB Algorithm} \label{sec:nonlinear}
In this section we analyze the waiting time of Good-UCB under assumption (i). We shall derive a non-linear regret bound as follows. For a fixed $\lambda \in (0,1)$ we consider the number of requests $T_{UCB}(\lambda)$ that Good-UCB needs to make in order to have a missing mass of interesting items smaller than $\lambda$ on each expert, see \eqref{eq:waitingtime}. We also consider the omniscient oracle strategy that minimizes this number of requests, given the knowledge of $\lambda$ and the sequence of answers to the requests $(X_{i,s})_{1\leq i \leq K, s \geq 1}$. We denote by $T^*(\lambda)$ the corresponding number of requests for this omniscient oracle strategy. (Note that this strategy is even more powerful than the OCL studied in the previous sections.) We now prove that with high probability, $T_{UCB}(\lambda)$ is smaller than $T^*(\lambda')$, for some $\lambda'$ close to $\lambda$ and up to a small additional term. 

\begin{theorem} \label{th:nlregret}
Let $c > 0$ and $S \geq 1$. Under assumption (i), Good-UCB (with constant $C=(1+\sqrt{2})\sqrt{c+2}$) satisfies with probability at least $1 - \frac{K}{c S^{c}}$, for any $\lambda \in (0,1)$,
\begin{multline*}T_{UCB}(\lambda) \leq T^* + K S \log \left(8 T^* + 16 K S \log( K S ) \right),\\ \text{where} \quad T^* = T^*\left(\lambda - \frac{3}{S} - 2  (1+\sqrt{2}) \sqrt{\frac{c+2}{S}} \right)\;.\end{multline*}
\end{theorem}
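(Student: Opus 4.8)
The plan is to exhibit a single high-probability event, \emph{not depending on $\lambda$}, on which one can bound the number of requests Good-UCB makes to each expert, and then convert this into a bound on $T_{UCB}(\lambda)$ by inverting a self-referential inequality. First I would record the form of the benchmark. Under assumption (i) the missing mass $R_{i,n}$ of \eqref{eq:seb2} depends only on the draws $X_{i,1},\dots,X_{i,n}$ of expert $i$ and is non-increasing in $n$. Writing $\tau_i(\mu)=\min\{n\ge 0: R_{i,n}\le\mu\}$, the omniscient oracle that knows $\lambda$ and all answers must request each expert $i$ at least $\tau_i(\mu)$ times, and doing exactly so is enough; hence $T^*(\mu)=\sum_{i=1}^K\tau_i(\mu)$. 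In particular $T^*=\sum_{i=1}^K\tau_i(\lambda')$ with $\lambda'=\lambda-\tfrac3S-2(1+\sqrt2)\sqrt{(c+2)/S}$.

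Next I would define the good event $\xi$: for every $i$, every integer $t\ge S$ and every $s\in\{1,\dots,t\}$, the two-sided estimate of Proposition~\ref{prop:inegGT} holds with bonus $C\sqrt{\log(4t)/s}$, i.e. $|R_{i,s}-\Rh_{i,s}|\le\tfrac1s+C\sqrt{\log(4t)/s}$ (the $\tfrac1s$ appearing only on the lower side), where $C=(1+\sqrt2)\sqrt{c+2}$. For fixed $(i,s)$ this is exactly Proposition~\ref{prop:inegGT} applied to the $s$ i.i.d.\ draws from $P_i$ at confidence $\delta_t=4(4t)^{-(c+2)}$, so a union bound over $s\le t$, $t\ge S$ and $i$ gives $\P(\xi^c)\le K\sum_{t\ge S}t\,\delta_t=K\,4^{-(c+1)}\sum_{t\ge S}t^{-(c+1)}\le K/(cS^c)$, the slack factor $4^{-(c+1)}$ absorbing the remaining constants. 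Crucially, $\xi$ involves only the draws, not $\lambda$.

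The heart of the argument is a deterministic claim on $\xi$: for any $\lambda$, any $t\le T_{UCB}(\lambda)$, and $i=I_t$, $s=n_{i,t-1}$, one cannot have both $s\ge\tau_i(\lambda')$ and $s\ge S\log(4t)$. Indeed, if both held then $t>s\ge S\log(4t)\ge S$, so $\xi$ is in force at $t$; since $t-1<T_{UCB}(\lambda)$, some expert $j$ still satisfies $R_{j,n_{j,t-1}}>\lambda$, and the upper estimate in $\xi$ pushes its index above $\lambda$, whence the index of the \emph{selected} arm obeys $\Rh_{i,s}+C\sqrt{\log(4t)/s}>\lambda$; the lower estimate then gives $R_{i,s}>\lambda-\tfrac1s-2C\sqrt{\log(4t)/s}\ge\lambda-\tfrac1S-2(1+\sqrt2)\sqrt{(c+2)/S}=\lambda'+\tfrac2S>\lambda'$, contradicting $R_{i,s}\le\lambda'$ (which follows from $s\ge\tau_i(\lambda')$ and monotonicity). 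This step is exactly where the tuning of $C$ and the definition of $\lambda'$ must be matched.

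Finally I would count: by the claim each expert $i$ is requested only while $s<\tau_i(\lambda')$ or $s<S\log(4t)\le S\log(4T_{UCB}(\lambda))$, so $n_{i,T_{UCB}(\lambda)}\le\max\{\tau_i(\lambda'),\,S\log(4T_{UCB}(\lambda))\}\le\tau_i(\lambda')+S\log(4T_{UCB}(\lambda))$. Summing over $i$ yields the self-referential bound $T_{UCB}(\lambda)\le T^*+KS\log(4T_{UCB}(\lambda))$, which I would invert using $\log z\le\eta z-\log\eta-1$ (with $\eta\asymp 1/(KS)$) to reach the explicit form $T_{UCB}(\lambda)\le T^*+KS\log(8T^*+16KS\log(KS))$. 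Since $\xi$ does not depend on $\lambda$, this holds \emph{simultaneously} for all $\lambda\in(0,1)$ on $\xi$, of probability at least $1-K/(cS^c)$. I expect the main obstacles to be the two quantitative matchings: making the union bound land exactly on $K/(cS^c)$ (which relies on the $\log(4\cdot)$ scale and the factor $4^{-(c+1)}$), and calibrating $C$ against $\lambda'$ so that an over-pulled arm's lower confidence bound provably exceeds $\lambda'$; the inversion of the self-referential inequality, though it needs care with constants, is routine.
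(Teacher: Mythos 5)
Your proposal is correct and follows essentially the same route as the paper: the same $\lambda$-free concentration event $\xi$, the same decomposition $T^*(\mu)=\sum_i T_i^*(\mu)$, the same key comparison (on $\xi$, some arm with true missing mass above $\lambda$ forces the selected arm's index, hence its lower confidence bound, above $\lambda'$), and the same self-referential inversion. The only (harmless) difference is bookkeeping: you argue at each pull time directly, whereas the paper freezes the process at the auxiliary time $U(\lambda)$ where all indices drop below $\lambda$ and then backs up one pull, which is what forces its extra step $s\Rh_{i,s}\geq (s-1)\Rh_{i,s-1}-1$ and the $-3/S$ rather than your $-1/S$; your version is slightly tighter and still implies the stated bound.
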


Informally this bound shows that Good-UCB slightly lags behind the omniscient oracle strategy. Under more restrictive assumptions on the experts it is possible to obtain a more explicit bound by studying the variations of $T$. In the next section we take another route and we show that the above upper bound can be used to prove a clear qualitative property for Good-UCB, namely its {\em macroscopic optimality}.

\begin{proof}
Recall that we work under assumption (i), and we run Good-UCB  with parameter $C=(1+\sqrt{2})\sqrt{c+2}$, for some positive constant $c$. After $t$ pulls, the missing mass estimate of expert $i$ is:
\[\Rh_{i,t} = \frac{1}{t} \sum_{x \in A} \1\left\{1 = \sum_{s=1}^{t} \1\{X_{i,s} = x\} \right\} .
\] 
We consider the following event:
\begin{multline*}
 \xi = \bigg\{\forall i \in \{1, \hdots, K\}, \forall t > S, \forall s\leq t,\\ \Rh_{i,s} -\frac{1}{s} - (1+\sqrt{2}) \sqrt{\frac{(c+2) \log(4 t)}{s}} \leq R_{i,s}  \leq  \Rh_{i,s} + (1+\sqrt{2}) \sqrt{\frac{(c+2) \log(4 t)}{s}}\bigg\} \;.
\end{multline*}
Using Proposition \ref{prop:inegGT} and an union bound, one obtains $\P(\xi) \geq 1 - \frac{K}{c S^{c}}$. In the following we work on the event $\xi$. Recall that $T^*(\lambda)$ (respectively ${T}_{UCB}(\lambda)$) is the time at which the omniscient oracle strategy (respectively the Good-UCB strategy) attains a missing mass smaller than $\lambda$ on all experts. Note that $T^*(\lambda)$ and ${T}_{UCB}(\lambda)$ are functions of $(X_{i,s})_{1\leq i \leq K, s \geq 1}$. In particular one can write:
\begin{align*}
& {T}_{UCB}(\lambda) = \min \left\{t \geq 1 : \forall i \in \{1, \hdots, K\}, R_{i, n_{i,t}} \leq \lambda \right\} , \\
& T^*(\lambda) = \sum_{i=1}^K T_i^*(\lambda), \;\; \text{where} \;\; T_i^*(\lambda)  = \min \left\{t \geq 1 : R_{i, t} \leq \lambda \right\} .
\end{align*}
Let
$$U(\lambda) = \min \left\{t \geq 1 : \forall i \in \{1, \hdots, K\}, \Rh_{i,n_{i,t}} + (1+\sqrt{2}) \sqrt{\frac{(c+2) \log(4 t)}{n_{i,t}}} \leq \lambda \right\} .$$
Let $S' \geq S$ to be defined later. On the event $\xi$ one clearly gets 
${T}_{UCB}(\lambda) \leq \max(S', U(\lambda))$. 
Moreover the following inequalities hold true if $U(\lambda) > S'$ (see below for an explanation of each inequality)
\begin{align*}
R_{i, n_{i,U(\lambda)}}  &\geq  \Rh_{i,n_{i,U(\lambda)}} -\frac{1}{n_{i,U(\lambda)}} - (1+\sqrt{2}) \sqrt{\frac{(c+2) \log(4 U(\lambda))}{n_{i,U(\lambda)}}} \\
& \geq  \Rh_{i,n_{i,U(\lambda)}-1} -\frac{3}{n_{i,U(\lambda)}} - (1+\sqrt{2}) \sqrt{\frac{(c+2) \log(4 U(\lambda))}{n_{i,U(\lambda)}}} \\
& \geq  \left(\lambda - (1+\sqrt{2}) \sqrt{\frac{(c+2) \log(4 U(\lambda))}{n_{i,U(\lambda)}-1}}\right) -\frac{3}{n_{i,U(\lambda)}} - (1+\sqrt{2}) \sqrt{\frac{(c+2) \log(4 U(\lambda))}{n_{i,U(\lambda)}}} \\
& \geq  \lambda -\frac{3}{n_{i,U(\lambda)}}  - 2  (1+\sqrt{2}) \sqrt{\frac{(c+2) \log(4 U(\lambda))}{n_{i,U(\lambda)}-1}}.
\end{align*}
The first inequality comes from the fact that we are on event $\xi$ and we assume $U(\lambda) > S'$. The second inequality uses the fact that when we make a request to an expert, the number of items uniquely seen on this expert can drop by at most one, and thus we get
$$s \Rh_{i,s} \geq (s-1) \Rh_{i,s-1} - 1 \geq s \Rh_{i,s-1} - 2 .$$
The third inequality is the key step of the proof. Consider the time step $t$ such that $n_{i,t} = n_{i, U(\lambda)}-1$ and $n_{i,t+1} = n_{i, U(\lambda)}$. Since $t < U(\lambda)$ we know that one of the expert satisfies $\Rh_{j,n_{j,t}} + (1+\sqrt{2}) \sqrt{\frac{(c+2) \log(4 t)}{n_{j,t}}} > \lambda$. Moreover, since Good-UCB is run with constant $C=(1+\sqrt{2})\sqrt{c+2}$ and since we make a request to expert $i$ at time $t$, we know that it maximizes the Good-UCB index, and thus $\Rh_{i,n_{i,t}} + (1+\sqrt{2}) \sqrt{\frac{(c+2) \log(4 t)}{n_{i,t}}} > \lambda$. Using that $t \leq U(\lambda)$ completes the proof of the third inequality. The fourth inequality is trivial. 

We just proved that if $n_{i,U(\lambda)} > S'$ then
$$R_{i, n_{i,U(\lambda)}} \geq \lambda - \frac{3}{S'} - 2  (1+\sqrt{2}) \sqrt{\frac{(c+2) \log(4 U(\lambda))}{S'}} ,$$
which clearly implies
$$n_{i,U(\lambda)} \leq T_i^*\left(\lambda - \frac{3}{S'} - 2  (1+\sqrt{2}) \sqrt{\frac{(c+2) \log(4 U(\lambda))}{S'}} \right) .$$
Thus in any case we have proved that
$$n_{i,U(\lambda)} \leq S' + T_i^*\left(\lambda - \frac{3}{S'} - 2  (1+\sqrt{2}) \sqrt{\frac{(c+2) \log(4 U(\lambda))}{S'}} \right) ,$$
which implies
\begin{eqnarray*}
U(\lambda) & \leq & K S' + T^*\left(\lambda - \frac{3}{S'} - 2  (1+\sqrt{2}) \sqrt{\frac{(c+2) \log(4 U(\lambda))}{S'}} \right) \\
& \leq & K S \log(4 U(\lambda)) + T^*\left(\lambda - \frac{3}{S} - 2  (1+\sqrt{2}) \sqrt{\frac{c+2}{S}} \right) ,
\end{eqnarray*}
where the last inequality follows by taking $S' = S \log(4 U(\lambda))$. Finally using Lemma \ref{lem:logineq} (in the appendix) and ${T}_{UCB}(\lambda) \leq \max(S', U(\lambda))$ ends the proof.
\end{proof}

\section{Macroscopic Limit} \label{sec:macro}
In the previous section we derived a very general non-linear regret bound for Good-UCB. Here we shall study the behavior of Good-UCB under more restrictive assumptions on the experts, but it will allow us to derive a clear qualitative statement about its performance, and it also permits easier comparison with other strategies such as uniform sampling. In this section we shall add the two following assumptions in addition to assumption (i):
\begin{enumerate}
\item[(ii)] finite supports with the same cardinality: $|\supp(P_i)| = N, \forall i \in \{1,\hdots,K\}$,
\item[(iii)] uniform distributions: $P_i(x) = \frac{1}{N}, \forall x \in \supp(P_i), \forall i \in \{1,\hdots,K\}$.
\end{enumerate}
These assumptions are primarily made in order to be able to assess the performance of the optimal strategy. 
In this setting it is convenient to re-parameterize slightly the problem (in particular we make explicit the dependency on $N$ for reasons that will appear later). Let $\X^N = \{1,\dots,K\} \times \{1,\hdots,N\}$, $A^N\subset \X^N$ the set of interesting items of $\X^N$, and $Q^N = |A^N|$ the number of interesting items. We assume that, for expert $i \in \{1,\dots,K\}$,  $P^N_i$ is the uniform distribution on $\{i\}\times \{1,\hdots,N\}$. We also denote by $Q_i^N = \left|A^N \cap \left(\{i\}\times \{1,\hdots,N\}\right) \right|$ the number of interesting items accessible through requests to expert $i$. Without loss of generality, we assume in this section that $Q^N_1\geq Q^N_2\geq\dots\geq Q^N_K$.

The macroscopic limit that we investigate in this section corresponds to the setting where $N$ goes to infinity together with the $Q^N_i$ in such a way that $Q^N_i/N \to q_i\in (0,1)$. For a given strategy we are interested in the time $T^N(\lambda)$ such that all experts have at most $N\lambda$ undiscovered interesting items. In particular we define $T^N_{UCB}(\lambda)$ (respectively $T^N_{*}(\lambda)$) to be the corresponding time for the Good-UCB strategy (respectively the oracle omniscient strategy). In the macroscopic limit we shall be particularly interested in normalized limit waiting time $\lim_{N \to +\infty} T^N(\lambda) / N$. 

\subsection{Macroscopic Behavior of the Oracle Closed-loop Strategy}
In this section we shall derive an explicit upper bound on the macroscopic limit of $T_*^N$ by studying 
the OCL strategy introduced in Section \ref{sec:oclnew}. Recall
that at each time step, OCL makes a request to one of the experts with highest number of still undiscovered interesting items: the expert requested at time $t$ is:
\[I_t \in \argmax_{1\leq i\leq K} P_i\left(A \setminus \{X_{1,1},\hdots,X_{1,n_{1,t}}, \hdots, X_{K,1},\hdots,X_{K,n_{K,t}}\}\right)\;.\]

\begin{theorem} \label{th:OCL}
For every $\lambda\in (0, q_1)$, for every sequence $(\lambda^N)_N$ converging to $\lambda$ as $N$ goes to infinity, under assumption (i), (ii) and (iii), almost surely 
\[\lim_{N\to\infty}\frac{T_{OCL}^N(\lambda^N)}{N} = \sum_{i: q_i>\lambda} \log\frac{q_i}{\lambda}  \;.\] 
\end{theorem}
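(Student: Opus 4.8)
The plan is to reduce the analysis of the greedy strategy OCL to $K$ independent single-expert problems and then carry out a law-of-large-numbers analysis of each single-expert waiting time.

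\textbf{Step 1: An exact decomposition of the OCL waiting time.} Under assumption (i) the missing mass $R_{i,n}$ of expert $i$ (see \eqref{eq:seb2}) depends only on the requests made to expert $i$, and it is non-increasing in the number $n$ of such requests. Write $T_i^*(\lambda) = \min\{t : R_{i,t} \le \lambda\}$ as in the proof of Theorem~\ref{th:nlregret}, and let $n_i$ be the number of requests OCL has made to expert $i$ at the stopping time $T^N_{OCL}(\lambda^N)$. Since $R_{i,n_i} \le \lambda^N$ at the stopping time and $R_{i,\cdot}$ is monotone, $n_i \ge T_i^*(\lambda^N)$. Conversely, at every step before stopping the maximal missing mass exceeds $\lambda^N$, so OCL only ever requests an expert whose current missing mass is strictly above $\lambda^N$; in particular just before its last request to expert $i$ one has $R_{i,n_i-1} > \lambda^N$, whence $n_i \le T_i^*(\lambda^N)$. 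Therefore $n_i = T_i^*(\lambda^N)$ for every $i$, and summing gives the exact identity $T^N_{OCL}(\lambda^N) = \sum_{i=1}^K T_i^*(\lambda^N)$ (incidentally showing that, in this uniform setting, OCL matches the omniscient oracle). It thus suffices to prove that $T_i^*(\lambda^N)/N \to \log(q_i/\lambda)$ almost surely when $q_i > \lambda$, and $\to 0$ when $q_i \le \lambda$.

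\textbf{Step 2: A single-expert law of large numbers.} Fix $i$ and let $D_{i,t} = N R_{i,t}$ be the number of still-undiscovered interesting items of expert $i$ after $t$ requests to it. Since each request draws a point uniformly in $\{1,\dots,N\}$, a given interesting item is missed with probability $(1-1/N)^t$, so $\E[D_{i,t}] = Q_i^N (1-1/N)^t$; hence for $t = \lfloor \alpha N\rfloor$ one has $\E[R_{i,\lfloor \alpha N\rfloor}] \to q_i e^{-\alpha}$. The quantity $D_{i,t}$ is a function of the $t$ i.i.d.\ draws with bounded differences (changing one draw alters the discovered set by at most one element, so $|\Delta D_{i,t}| \le 1$), so McDiarmid's inequality gives $\P\big(|R_{i,t} - \E R_{i,t}| > \varepsilon\big) \le 2\exp(-2\varepsilon^2 N^2 / t)$. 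For $t$ of order $N$ this is summable in $N$, so Borel--Cantelli yields, for each fixed $\alpha \ge 0$, the almost sure convergence $R_{i,\lfloor \alpha N \rfloor} \to q_i e^{-\alpha}$.

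\textbf{Step 3: Uniform convergence and inversion of the crossing time.} Carrying out Step 2 simultaneously for all rational $\alpha$ (a countable intersection of almost sure events), one gets pointwise a.s.\ convergence on a dense set of the non-increasing functions $\alpha \mapsto R_{i,\lfloor \alpha N\rfloor}$ towards the continuous, strictly decreasing limit $\alpha \mapsto q_i e^{-\alpha}$. A pointwise limit of monotone functions towards a continuous limit is uniform on every compact interval (a Dini/P\'olya-type argument), so the convergence is almost surely uniform for $\alpha$ in any bounded set. Since $T_i^*(\lambda^N)/N$ is exactly the first $\alpha$ at which $R_{i,\lfloor \alpha N\rfloor}$ crosses the level $\lambda^N$, and $\lambda^N \to \lambda$, uniform convergence together with the strict monotonicity of the limit identifies this crossing point with the unique solution of $q_i e^{-\alpha} = \lambda$, namely $\alpha = \log(q_i/\lambda)$, whenever $q_i > \lambda$. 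When $q_i < \lambda$ the initial missing mass $Q_i^N/N \to q_i < \lambda$ eventually lies below $\lambda^N$, so $T_i^*(\lambda^N) = 0$. Summing the per-expert limits over $i$ yields $T^N_{OCL}(\lambda^N)/N \to \sum_{i:q_i>\lambda} \log(q_i/\lambda)$.

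\textbf{Main obstacle.} Steps 1 and 2 are routine; the delicate point is Step 3, namely upgrading the pointwise almost sure convergence of the random profile $R_{i,\lfloor \alpha N\rfloor}$ into control of the random \emph{first-crossing time} $T_i^*(\lambda^N)$, uniformly over the moving threshold $\lambda^N$. The monotonicity of the missing-mass process is what makes this possible, but the boundary case $q_i = \lambda$ (where the crossing level $\log(q_i/\lambda^N) \to 0$ degenerates) needs separate care to confirm that such an expert absorbs only $o(N)$ requests and hence does not contribute to the limit.
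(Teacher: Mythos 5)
Your proof is correct, and while it shares the paper's opening move---reducing $T^N_{OCL}(\lambda^N)$ to the sum of per-expert crossing times $\sum_i T_i^*(\lambda^N)$ (the paper obtains the same identity by exhibiting an explicit interleaved schedule for OCL and writing $T^N_{OCL}(\lambda^N)=\sum_{i:Q_i^N>N\lambda^N}D^N_{i,Q_i^N-N\lambda^N}$; your monotonicity argument for $n_i=T_i^*(\lambda^N)$ is a cleaner derivation, and your remark that OCL thereby matches the omniscient oracle is consistent with the decomposition $T^*(\lambda)=\sum_i T_i^*(\lambda)$ used in the proof of Theorem~\ref{th:nlregret})---the single-expert analysis is genuinely different. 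The paper represents $D^N_{i,Q_i^N-N\lambda^N}$ as a sum of independent geometric inter-discovery times $S^N_{i,k}$ with parameters $(1+Q_i^N-k)/N$, identifies the normalized mean as a harmonic sum converging to $\log(q_i/\lambda)$ (Lemma~\ref{lem:harmo}), and controls fluctuations via a fourth-moment bound, Markov's inequality and Borel--Cantelli. You instead analyze the forward missing-mass profile: McDiarmid at macroscopic times $\lfloor\alpha N\rfloor$ gives $R_{i,\lfloor\alpha N\rfloor}\to q_ie^{-\alpha}$ almost surely, and the monotone-functions-to-a-continuous-limit (P\'olya/Dini) argument upgrades this to uniform convergence on compacts, which is exactly what is needed to invert the first-crossing time of the moving level $\lambda^N$. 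Both routes are sound. The paper's buys a direct handle on the waiting time itself, including an explicit $O(1/(N\lambda^N))$ control of its mean; yours is more elementary and more portable, since it never uses that the inter-discovery times are geometric and would survive relaxations of assumption~(iii). The boundary case $q_i=\lambda$ that you flag does work out: for every $\epsilon>0$ one has $\lambda^N\to\lambda>q_ie^{-\epsilon}$, so eventually $R_{i,\lfloor\epsilon N\rfloor}<\lambda^N$ and hence $T_i^*(\lambda^N)\leq\epsilon N$, so such an expert contributes $o(N)$---the paper passes over this point just as briefly.
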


\begin{proof}
Denote by $B^N_i$ the set of interesting items in $\{1,\hdots, N\}$ supported by $P^N_i$: $B^N_i = \{x\in\{1,\hdots,N\} : (i,x)\in A^N\}$. Successive draws of expert $i$ are denoted $(i,X^N_{i,1}),(i,X^N_{i,2})\dots$ where the variables $(X^N_{i,n})_{i,n}$ are assumed to be independent.
Without loss of generality, we may assume that $N\lambda^N$ is a positive integer, for otherwise $\lambda^N$ can be replaced by $\lceil N\lambda^N\rceil/N$.
We denote by $(D^N_{i,k})_{1\leq k\leq Q^N_i}$ the increasing sequence of the indices corresponding to draws for which new interesting items are discovered with expert $i$: 
\[D^N_{i,1} = \min\left\{n\geq 1 : X^N_{i,n} \in B^N_i\right\},
\quad D^N_{i,2} = \min\left\{n\geq D^N_{i,1} : X^N_{i,n} \in B^N_i\setminus \left\{X^N_{i,D^N_{i,1}}\right\} \right\}, \dots\]
We also define $S^N_{i,0} = 0$ and for $k\geq 1, S^N_{i,k} = D^N_{i,k}-D^N_{i,k-1}$. The random variables $S^N_{i,k}$ ($1\leq i\leq K, k\geq 1$) are independent with geometric distribution $\G((1+Q^N_i-k)/N)$.

At every step, the OCL should call the expert with maximal number of undiscovered interesting items. Hence, it can:
\begin{itemize}
 \item first request expert $1$ for $D^N_{1,Q^N_1-Q^N_2}$ steps;
 \item then, alternatively request
 \begin{itemize}
   \item expert $1$ for $S^N_{1,1+Q^N_1-Q^N_2}$ steps;
   \item expert $2$ for $S^N_{2,{}1}$ steps;
   \item expert $1$ for $S^N_{1,2+Q^N_1-Q^N_2}$ steps;
   \item expert $2$ for $S^N_{2,2}$ steps;
   \item and so on, until there are only $Q^N_3$ undiscovered interesting items on experts $1$ and $2$.
 \end{itemize}
 \item and so on, including successively experts $3,4,\dots,K$ in the alternation.
\end{itemize}

Obviously, 
\[T_{OCL}^N(\lambda^N) = \sum_{i: Q^N_i>N\lambda^N} D^N_{i, Q^N_i-N\lambda^N}\;.\]
It suffices now to show that for every expert $i\in\{1,\dots,K\}$,  $D^N_{i, Q^N_i-N\lambda^N}/N$ converges almost surely to $\log(q_i/\lambda)$ as $N$ goes to infinity.
Write
\begin{equation}\label{eq:WsumS}
W^N_{i,N\lambda^N} = \frac{1}{N}\left(D^N_{i, Q^N_i-N\lambda^N} - \E\left[D^N_{i, Q^N_i-N\lambda^N}\right]\right) = 
\frac{1}{N}\sum_{k=1}^{Q^N_i-N\lambda^N-1} \left(S^N_{i, k} - \E\left[S^N_{i, k}\right]\right) \;.\end{equation}
For every positive integer $d$ and for $k\in\{1,\dots, N\lambda^N-1\}$, elementary manipulations of the geometric distribution yield that \[\E\left[\left(S^N_{i, k} - \E\left[S^N_{i, k} \right]\right)^d\right] \leq \E\left[\left(S^N_{i, N\lambda^N} - \E\left[S^N_{i, N\lambda^N} \right]\right)^d\right] \leq \frac{c(d)}{(\lambda^N)^d} \leq \frac{2c(d)}{\lambda^4}\]
for some positive constant $c(d)$ depending only on $d$, and for $N$ large enough. 
Hence, taking~\eqref{eq:WsumS} to the fourth power and developing yields
\[\E\left[\left(W^N_{i,N\lambda^N}\right)^4\right] \leq \frac{c'}{N^2\lambda^4}\] for some positive constant $c'$. Using Markov's inequality together with the Borel-Cantelli lemma, this permits to show that $W^N_{i,\lambda^N}$ converges almost surely to $0$ as $N$ goes to infinity.
But
\[ \frac{1}{N}\E\left[D^N_{i, Q^N_i-N\lambda^N}\right] = \frac{1}{Q^N_1} + \dots +\frac{1}{N\lambda^N+1} = \log\frac{Q^N_i}{N\lambda^N} - \epsilon^N\;, \]
with $0\leq \epsilon^N \leq 1/(N\lambda^N)$ according to Lemma~\ref{lem:harmo}, and thus 
\[\frac{1}{N}\E\left[D^N_{i, Q^N_i-N\lambda^N}\right]\to \lim_{N\to\infty} \log\left( \frac{Q^N_i/N}{\lambda^N} \right) = \log(q_i/\lambda)\;,\]
which concludes the proof.
\end{proof}

\subsection{Macroscopic Behavior of Uniform Sampling}
In this section we study the simple uniform sampling strategy that cycles through the experts, that is, at time $t$ uniform sampling makes a request to the $(t \mod [K])^{th}$ expert. This strategy is not macroscopically optimal unless all experts have the same number of interesting items. Furthermore the next proposition makes precise the extent of improvement of a macroscopic optimal strategy over uniform sampling. The proof follows the exact same steps than the proof of Theorem \ref{th:OCL} and thus is omitted.

\begin{proposition} \label{prop:unif}
For every $\lambda\in (0, q_1)$, for every sequence $(\lambda^N)_N$ converging to $\lambda$ as $N$ goes to infinity, under assumption (i), (ii) and (iii), almost surely
\[\lim_{N\to\infty}\frac{T_{US}^N(\lambda^N)}{N} = K \log\frac{q_1}{\lambda}\;.\]
\end{proposition}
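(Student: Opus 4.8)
The plan is to mirror the proof of Theorem~\ref{th:OCL}, adapting only the accounting of how requests are distributed across experts. Under uniform sampling each expert receives exactly $t/K$ requests (up to rounding) after $t$ total requests, so after $Kn$ total requests every expert has been queried $n$ times. The quantity $T_{US}^N(\lambda^N)$ is the first time $t$ at which \emph{every} expert has missing mass at most $\lambda^N$. Since all experts are queried at the same cyclic rate, the binding constraint is the expert that takes longest to reach missing mass $\lambda^N$; by the ordering convention $Q_1^N \geq \dots \geq Q_K^N$, and because under assumption (iii) each undiscovered interesting item on expert~$i$ contributes mass $1/N$, the expert with the most interesting items (expert~1) is the slowest to be exhausted below the threshold $\lambda^N$.

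First I would introduce, exactly as in the proof of Theorem~\ref{th:OCL}, the discovery times $D^N_{i,k}$ and the inter-discovery gaps $S^N_{i,k}$, independent with geometric distribution $\G((1+Q^N_i-k)/N)$. The number of requests expert~$i$ needs \emph{on its own counter} to drop below $N\lambda^N$ undiscovered items is $D^N_{i,\,Q^N_i - N\lambda^N}$, and the argument of Theorem~\ref{th:OCL}---a fourth-moment bound on the centered partial sum $W^N_{i,N\lambda^N}$ combined with Markov's inequality and Borel--Cantelli, together with the harmonic-sum expansion via Lemma~\ref{lem:harmo}---shows that $D^N_{i,\,Q^N_i - N\lambda^N}/N \to \log(q_i/\lambda)$ almost surely. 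Since uniform sampling gives each expert one request per cycle of length $K$, the total number of requests until expert~$i$ is individually exhausted below threshold is $K$ times its private discovery count (up to an $O(1)$ rounding error that vanishes after dividing by $N$).

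The key structural observation, replacing the interleaving bookkeeping of OCL, is therefore
\[
T_{US}^N(\lambda^N) = K \max_{1\leq i\leq K} D^N_{i,\,Q^N_i - N\lambda^N} + O(1),
\]
because the process stops only once the last (slowest) expert clears the threshold, and all experts advance in lockstep. Dividing by $N$ and passing to the limit, the maximum of the almost-sure limits $\log(q_i/\lambda)$ over those $i$ with $q_i > \lambda$ is attained at $i=1$ since $q_1 \geq \dots \geq q_K$ and $\log$ is increasing. This yields $\lim_{N\to\infty} T_{US}^N(\lambda^N)/N = K\log(q_1/\lambda)$ almost surely.

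The main obstacle is justifying the interchange of the maximum with the almost-sure limit and controlling the rounding/cycle-alignment error. I would handle the former by noting that a finite maximum of finitely many almost-surely convergent sequences converges almost surely to the maximum of the limits (take the intersection of the $K$ full-probability events). The cycle-alignment point needs a little care: expert~$i$ reaches its threshold at global time between $K\cdot D^N_{i,\cdot} - (K-1)$ and $K\cdot D^N_{i,\cdot}$ depending on its position in the cycle, but this discrepancy is at most $K-1 = O(1)$ and hence negligible after normalization by $N$. These are precisely the routine steps the authors elide when stating that the proof ``follows the exact same steps'' as that of Theorem~\ref{th:OCL}.
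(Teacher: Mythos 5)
Your proposal is correct and matches the route the paper intends: the paper omits the proof of Proposition~\ref{prop:unif}, stating only that it follows the same steps as Theorem~\ref{th:OCL}, and your adaptation---replacing the OCL interleaving by the lockstep identity $T_{US}^N(\lambda^N) = K\max_i D^N_{i,\,Q^N_i - N\lambda^N} + O(1)$ and then using the almost-sure convergence $D^N_{i,\,Q^N_i - N\lambda^N}/N \to \log(q_i/\lambda)$ from that proof---is exactly the intended argument. The only cosmetic caveat is that the maximum should be restricted to experts with $Q^N_i > N\lambda^N$ (or one should set $D^N_{i,k}=0$ for $k\leq 0$), which changes nothing since the limiting maximum is attained at $i=1$.
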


\subsection{Macroscopic Optimality of Good-UCB}
Using the regret bound of Theorem \ref{th:nlregret} we obtain the following corollary that shows the asymptotic optimality of the Good-UCB algorithm in the macroscopic sense.

\begin{corollary}
Take $C=(1+\sqrt{2})\sqrt{c+2}$ with $c>3/2$ in Algorithm~\ref{algo:GUCB}.
Under assumption (i), (ii) and (iii), for every sequence $(\lambda^N)_N$ converging to $\lambda$ as $N$ goes to infinity, almost surely 
\[\limsup_{N \to +\infty} \frac{T_{UCB}^N(\lambda^N)}{N} \leq \sum_{i: q_i>\lambda} \log\frac{q_i}{\lambda}\;.\]
\end{corollary}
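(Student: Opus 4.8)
The plan is to combine the time-uniform regret bound of Theorem~\ref{th:nlregret} with the explicit macroscopic waiting time for the oracle obtained in Theorem~\ref{th:OCL}. The key observation is that under assumptions (ii) and (iii), the omniscient oracle $T^*$ and the closed-loop oracle $T_{OCL}$ have the same macroscopic limit: since all nonzero probabilities equal $1/N$, the missing mass of expert $i$ after $n$ distinct interesting items have been found is simply $(Q_i^N - (\text{discoveries}))/N$, so the waiting time is entirely governed by the discovery times $D^N_{i,k}$, which are intrinsic to the sequence $(X_{i,s})$ and do not depend on the scheduling policy. Hence $T^{*,N}$ and $T_{OCL}^N$ coincide (the omniscient oracle can do no better than achieve each expert's own discovery schedule), and by Theorem~\ref{th:OCL} both satisfy $T^{*,N}(\lambda^N)/N \to \sum_{i:q_i>\lambda}\log(q_i/\lambda)$.

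First I would instantiate Theorem~\ref{th:nlregret} with a growing choice of the parameter $S$, say $S = S_N \to \infty$ chosen so that $S_N/N \to 0$ while $S_N^{c}/N \to \infty$ (for instance $S_N = N^{\alpha}$ with $1/c < \alpha < 1$, which is possible precisely because $c > 3/2 > 1$). The condition $c > 3/2$ guarantees both that such an $\alpha$ exists and that the failure probability $K/(c S_N^{c})$ is summable in $N$, so by Borel--Cantelli the event $\xi$ in Theorem~\ref{th:nlregret} holds for all large $N$ almost surely. On this event the bound reads
\[
T_{UCB}^N(\lambda^N) \leq T^{*,N}(\lambda^N_S) + K S_N \log\!\left(8 T^{*,N}(\lambda^N_S) + 16 K S_N \log(K S_N)\right),
\]
where $\lambda^N_S = \lambda^N - \tfrac{3}{S_N} - 2(1+\sqrt{2})\sqrt{(c+2)/S_N}$ is the shifted threshold.

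The next step is to divide through by $N$ and pass to the limit. The additive correction term is $\frac{K S_N}{N}\log(\cdots)$; since $S_N/N\to 0$ and $T^{*,N}(\lambda^N_S)/N$ is bounded (it converges), the logarithm grows only like $\log N$, so this whole term vanishes. For the main term I would note that the shift $\lambda^N_S \to \lambda$ because both $3/S_N$ and $\sqrt{(c+2)/S_N}$ tend to $0$; thus $(\lambda^N_S)_N$ is itself a sequence converging to $\lambda$, and Theorem~\ref{th:OCL} (applied to the oracle, via the identification of $T^*$ with $T_{OCL}$) gives $T^{*,N}(\lambda^N_S)/N \to \sum_{i:q_i>\lambda}\log(q_i/\lambda)$. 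Taking $\limsup_N$ on both sides then yields the claimed inequality.

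The main obstacle is the interplay between the two limiting procedures: the threshold shift $\lambda^N_S$ depends on $S_N$, which in turn must be coupled to $N$, so one must verify that the shifted sequence still converges to $\lambda$ (and not to some $\lambda' < \lambda$) and that Theorem~\ref{th:OCL}'s almost-sure convergence applies uniformly enough along this coupled sequence. The cleanest route is to choose $S_N \to \infty$ slowly enough that $\lambda^N_S \to \lambda$ while fast enough that $S_N^c/N \to \infty$ for summability; the hypothesis $c > 3/2$ is exactly what reconciles these competing demands. A secondary technical point is justifying that the omniscient oracle's macroscopic waiting time equals that of OCL under (ii)--(iii); this requires observing that the ordered discovery times determine the missing masses regardless of scheduling, so no policy can beat the per-expert schedule, which I would state carefully but not belabor.
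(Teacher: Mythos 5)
Your proposal is correct and follows essentially the same route as the paper's proof: instantiate Theorem~\ref{th:nlregret} with $S=S_N\to\infty$ (the paper fixes $S_N=N^{2/3}$, whence the condition $c>3/2$ for summability and Borel--Cantelli), note that the shifted threshold still converges to $\lambda$ so Theorem~\ref{th:OCL} applies to it, and use $T^{*,N}\leq T_{OCL}^N$ to transfer the macroscopic limit. The only (harmless) difference is that you assert equality of $T^{*,N}$ and $T_{OCL}^N$, whereas the paper only needs, and only invokes, the trivial inequality.
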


\begin{proof}
Let $S^N = N^{2/3}$. 
First note that:
\[\ell^N \stackrel{def}{=}\lambda^N - \frac{3}{S^N} - 2  (1+\sqrt{2}) \sqrt{\frac{c+2}{S^N}} \to \lambda \quad \hbox{ when } N \to \infty\;.\]
Thus, by Theorem~\ref{th:OCL}, and the fact that the OCL strategy needs at least as much time as the omniscient oracle strategy in order to find the same number of items, there exists an event $\Omega$ of probability $1$ on which
\[\limsup_{N \to +\infty} \frac{T_*^N \left(\ell^N\right)}{N} \leq \sum_{i: q_i>\lambda} \log\frac{q_i}{\lambda}\;.\]
Thus, according to Theorem~\ref{th:nlregret}, for each positive integer $N$ there exists an event $A_N$ of probability $P(A_N)\geq 1-K/(cN^{2c/3})$ on which
\begin{align*}
 \frac{T^N_{UCB}(\lambda^N)}{N} &\leq \frac{T_*^N\left(\ell^N \right)}{N} + \frac{K S^N}{N}\log \left(8 T_*^N\left(\ell^N\right) + 16 K S \log( K S^N ) \right)\\
& = \frac{T_N^*\left(\ell^N \right)}{N}  + O\left(\frac{\log(N)}{N^{1/3}}\right)\;.
\end{align*}
Using Borel-Cantelli's lemma and the fact that, with our choice of parameters, $\sum_{N} N^{-2c/3} <\infty$, we obtain that except maybe on the set (of probability $0$) $\bar{\Omega} \cup \limsup \overline{A_N}$, 
\[ \limsup_{N\to\infty} \frac{T^N_{UCB}(\lambda^N)}{N} \leq  \limsup_{N \to +\infty} \frac{T^N_*(\ell^N)}{N} \leq \sum_{i: q_i>\lambda} \log\frac{q_i}{\lambda}\;,\]
which ends the proof.
\end{proof}

\section{Simulations}\label{sec:simus}
We provide a few simulations illustrating the behavior of the Good-UCB algorithm and the asymptotic analysis above of Section~\ref{sec:macro}. We first consider an example with $K=7$ different sampling distributions satisfying assumptions [(i),(ii),(iii)], with respective proportions of interesting items $q_1 = 51.2\%, q_2 = 25.6\%, q_3 = 12.8\%, q_4 = 6.4\%, q_5 = 3.2\%, q_6 = 1.6\%$ and $ q_7 = 0.8\%$.

We have chosen to display here the numbers of items found as a function of the number of draws (see \eqref{eq:numberfound}), instead of the times $T^N(\lambda^N)$, because they express more intuitively the discovering possibilities of each algorithm. Note, however, that the correspondence between these two quantities is straightforward, especially in the macroscopic limit: For $\lambda\in(0, q_1)$ let
\begin{equation}\label{eq:defT}
 T(\lambda)=\sum_{i:q_i>\lambda} \log\frac{q_i}{\lambda}\;.
\end{equation}
It is easy to show that the proportion of interesting items found by the OCL strategy after $N t$ draws converge to 
\begin{equation}\label{eq:defF}
F(t) = \sum_{i=1}^K \left(q_i-T^{-1}(t)\right)_+\;.\end{equation}
Furthermore the latter expression is a lower bound for the corresponding proportion of interesting items found by the Good-UCB algorithm. Proposition~\ref{prop:roo}, proved in the Appendix, provides a more explicit expression for $F$: denoting $q = \sum_{i=1}^K q_i$, there exists an increasing, $\{1,\dots,K\}$-valued function $I$ such that, for each $t$, 
\[ F(t) =  q -  I(t) \uq_{I(t)} \exp\left( -t/I(t) \right) \;,\]
where $\uq_{I(t)}$ denotes the geometric mean of $q_1,\dots,q_{I(t)}$.
This permits an explicit comparison of the macroscopic performance of the Good-UCB algorithm with uniform sampling: when all distributions are sampled equally often, the proportion of unseen interesting items at time $t$ is smaller than
\[\sum_{i=1}^K q_i\exp(-t/K) = K\bar{q}_K\exp(-t/K)\;,\]where $\bar{q}_K = (\sum_{i=1}^K q_i)/K$ is the arithmetic mean of the $(q_i)_i$.
On the other hand, for the Good-UCB algorithm, the proportion of unseen interesting items at time $t$ is smaller than
\[I(t) \uq_{I(t)} \exp\left( -t/I(t) \right)\;.\]
The ratio of those two quantities is a decreasing function of time lower-bounded by $\bar{q}_K / \uq_K\geq 1$, the ratio of the arithmetic mean with the geometric mean of the  $(q_i)_i$. As expected, this ratio gets larger when the proportions of interesting items among experts becomes more unbalanced.

Figure~\ref{fig:nbFound} displays the number of items found as a function of time by the Good-UCB (solid), the OCL (dashed) and the uniform sampling scheme that alternates between experts (dotted). The results are presented for sizes $N=128, N=500, N=1000$ and $N=10000$, each time for one representative run (averaging over different runs removes the interesting variability of the process). We chose to plot the number of items found rather than the waiting time $t$ as the former is easier to visualize while the latter was easier to analyze. In fact, macroscopic optimality in terms of number of items found could also be derived with the techniques of Section \ref{sec:macro}. Figure~\ref{fig:nbFound} also shows clearly the macroscopic convergence of Good-UCB to the OCL.
Moreover, it can be seen that, even for very moderate values of $N$, the Good-UCB significantly outperforms uniform sampling even if it is clearly distanced by the OCL.

For these simulations, the parameter $C$ of Algorithm Good-UCB  has been taken equal to $1/2$, which is a rather conservative choice. In fact, it appears that during all rounds of all runs, all upper-confidence bounds did contain the actual missing mass. Of course, a bolder choice of $C$ can only improve the performance of the algorithm, as long as the confidence level remains sufficient.

\begin{figure}

  \includegraphics[width=7.8cm]{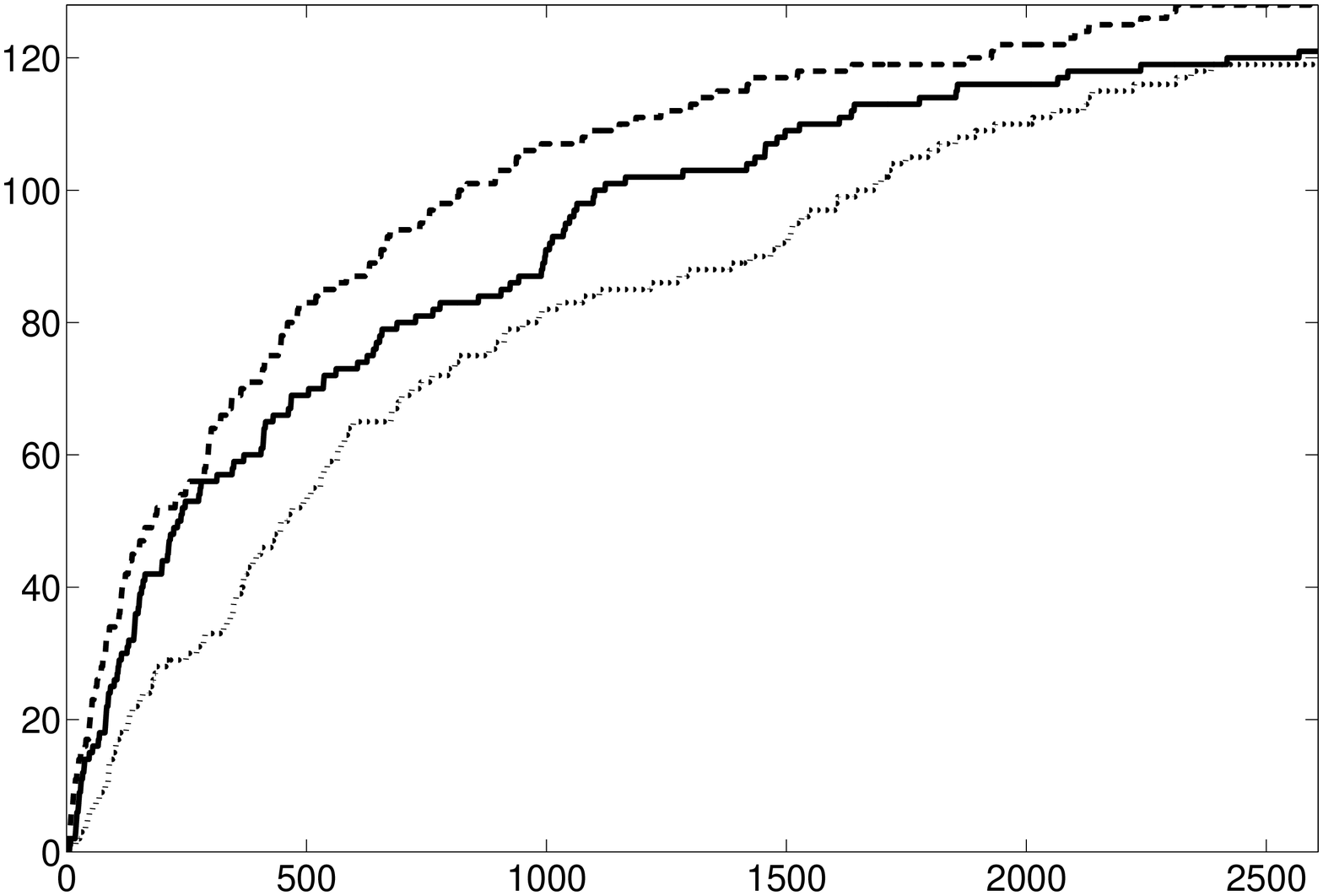}
  \includegraphics[width=7.8cm]{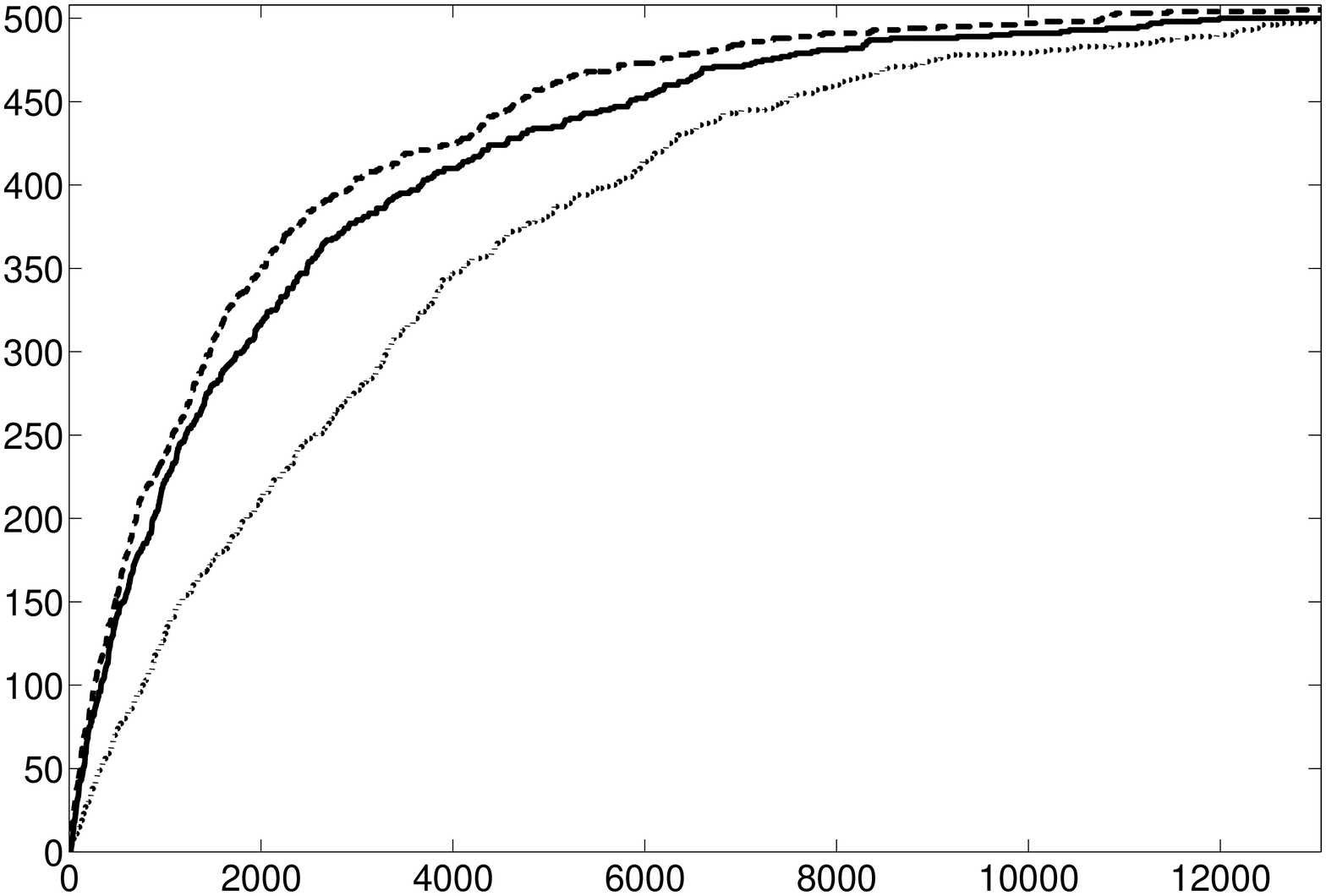} 
 
  \includegraphics[width=7.8cm]{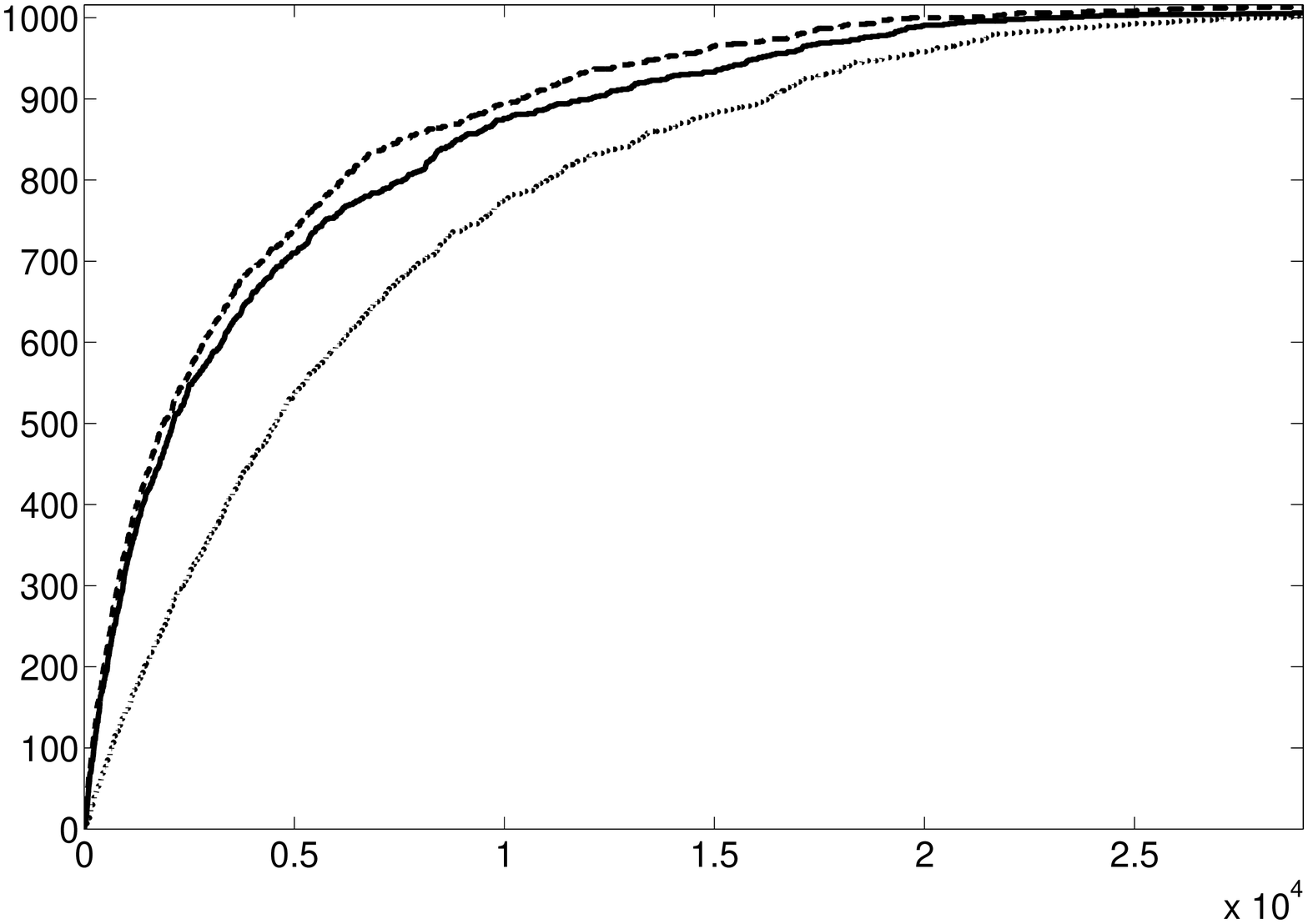}
  \includegraphics[width=7.8cm]{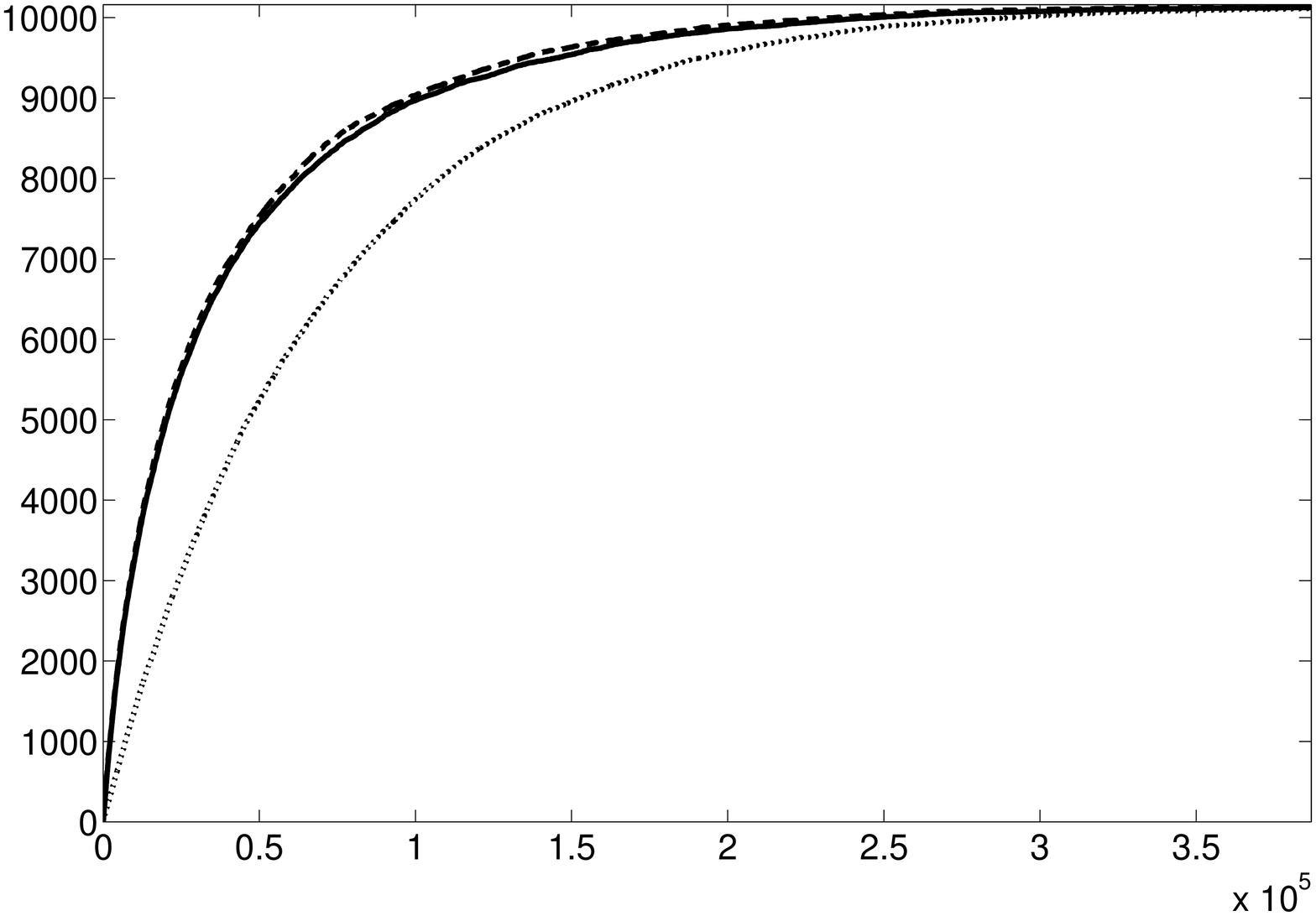}

  \caption{Number of items found by Good-UCB (solid), the OCL (dashed), and uniform sampling (dotted) as a function of time for sizes $N=128, N=500, N=1000$ and $N=10000$ in a $7$-experts setting.}
  \label{fig:nbFound}
  \end{figure}
\begin{figure}
  \includegraphics[width=7.8cm]{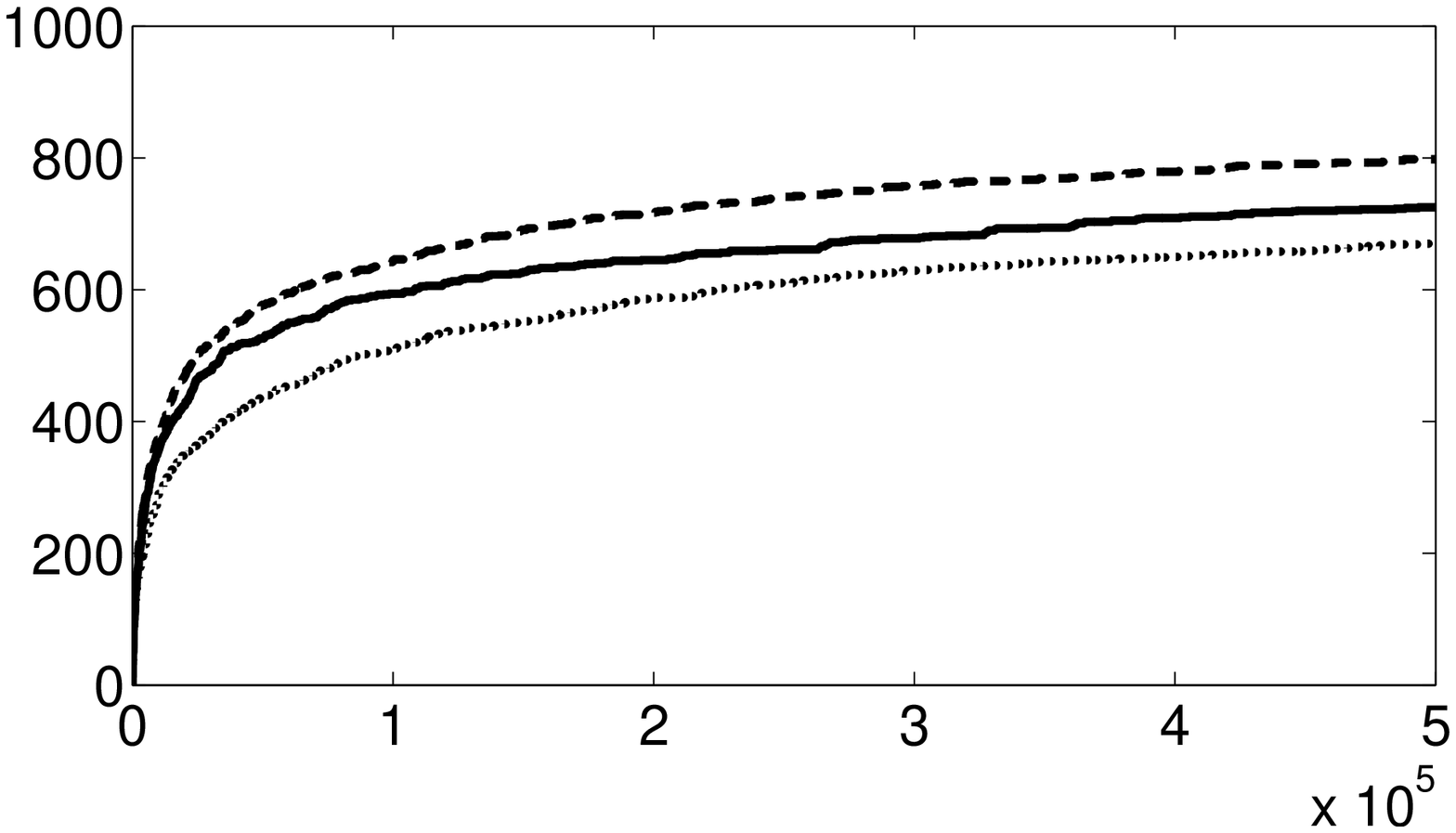}
  \includegraphics[width=7.8cm]{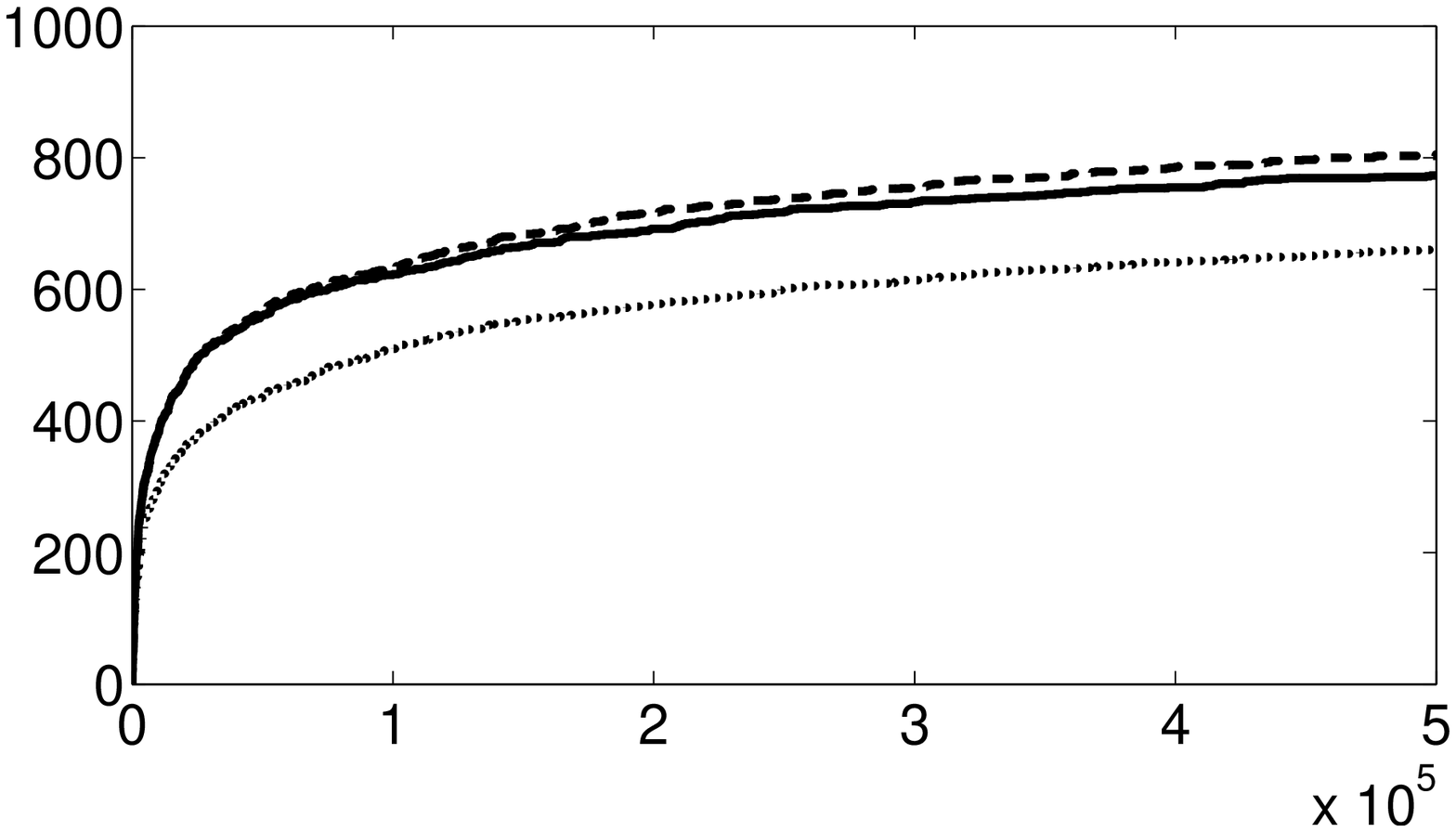}
 \caption{Number of prime numbers found by Good-UCB (solid), the OCL (dashed), and uniform sampling (dotted) as a function of time, using geometric experts with means $100, 300, 500, 700$ and $900$, for $C=0.1$ (left) and $C=0.02$ (right).}
  \label{fig:primeNumbers}
  \end{figure}

In order to illustrate the efficiency of the Good-UCB algorithm in a more difficult setting, which does not satisfy any of the assumptions (i), (ii) and (iii), we also considered the following (artificial) example: $K=5$ probabilistic experts draw independent sequences of geometrically distributed random variables, with expectations $100$, $300$, $500$, $700$ and $900$ respectively. The set of interesting items is the set of prime numbers.
We compare the oracle closed-loop policy, Good-UCB and uniform sampling.
The results are displayed in Figure~\ref{fig:primeNumbers}. Even if the difference remains significant between Good-UCB and the OCL, the former still performs significantly better than uniform sampling during the entire discovery process. 
In this example, choosing a smaller parameter $C$ seems to be preferable; this is due to the fact that the proportion of interesting items on each arm is low; in that case, it may be possible to show, by using tighter concentration inequalities, that the concentration of the Good-Turing estimator is actually better than suggested by Proposition~\ref{prop:inegGT}. In fact, this experiment suggests that the value of $C$ should be chosen smaller when the remaining missing mass is small. 

\section*{Acknowledgements}
We are especially thankful to one of the anonymous referees for suggesting to us to write Sections~\ref{sec:oclnew} and~\ref{sec:classical}.

\appendix

\section{Proofs of Technical Lemmas}

\begin{proof}\textbf{of lemma~\ref{lem:OCLoptimal}}
We proceed by induction on $t$. For $t=1$, the result is obvious. 
For $t>1$, denote by $\pb$ the policy choosing $I^{\pb}_1 = I_1^{\pi}$ and then playing like OCL for the $t-1$ remaining rounds. 
Denote $H_1 = (I_1^{\pi}, X_{I_1^{\pi}, 1})$, and $F^\pi(2:t)$ (respectively $F^{\pb}(2:t)$) the number of interesting items found by policy $\pi$ (respectively $\pb$) between rounds $2$ and $t$. Note that conditionally on $H_1$, $F^{\pb}(2:t)$ corresponds to $F^*(t-1)$ in some modified problem (where one interesting item on expert $I_1^{\pi}$ might have been removed from the set of interesting items). Thus one can apply the induction hypothesis to obtain
$$\E\left[ F^{\pi}(2:t) | H_1\right] \leq \E\left[ F^{\pb}(2:t) | H_1\right] .$$
Let us assume in the following that $I_1^{\pi}$ is deterministic (we make this assumption only for sake of clarity, everything go through with a randomized choice of $I_1^{\pi}$). Then thanks to the above inequality one has
\begin{equation}\label{eq:lm2:0}
 \E F^\pi(t) = R_{I_1^{\pi},0} + \E\left[ F^{\pi}(2:t)\right] \leq R_{I_1^{\pi},0} + \E\left[ F^{\pb}(2:t)\right] = \E F^{\pb}(t)\;.
\end{equation}
Now let \[
     \tau = \min \{s \geq 1 : I_{s}^* = I_1^{\pi}\} \;.
    \]
On the event $\tau\leq t$, OCL and $\pb$ observe exactly the same items during the $t$ first rounds, and thus 
\begin{equation}\label{eq:lm2:1}F^{\pb}(t)\1\{\tau\leq t\}= F^*(t) \1\{\tau\leq t\}\;.\end{equation}
On the other hand on the event $\tau>t$, $\pb$ observe the same items between rounds $2$ and $t$ than OCL between rounds $1$ and $t-1$, that is $F^{\pb}(2:t) \1\{\tau>t\} = F^*(t-1) \1\{\tau>t\}$. Thanks to assumption (i), this implies (denoting $Y^*_1, \hdots, Y^*_{t}$ for the sequence of items observed by OCL),
\begin{equation}
 F^{\pb}(t)\1\{\tau>t\} =  \left(\1\{X_{I_1^{\pi},1} \in A\} + F^*(t) - \1\{Y^*_t \in A \setminus \{Y^*_1, \hdots, Y^*_{t-1}\}\} \right)\1\{\tau>t\} \;. \label{eq:lm2:2}
\end{equation}
By combining \eqref{eq:lm2:0}, \eqref{eq:lm2:1} and \eqref{eq:lm2:2}, it only remains to show that
\begin{equation}
\E [ \1\{X_{I_1^{\pi},1} \in A\} \1\{\tau>t\} ] \leq \E [ \1\{Y^*_t \in A \setminus \{Y^*_1, \hdots, Y^*_{t-1}\}\} \1\{\tau>t\} ] . \label{eq:lm2:3}
\end{equation}
Since $X_{I_1^{\pi}, 1}$ is independent of $\1\{\tau>t\}$, one has $\E [ \1\{X_{I_1^{\pi},1} \in A\} \1\{\tau>t\} ] = \E [ R_{I_1^{\pi},0} \1\{\tau>t \}]$. Moreover,
noting that $\1\{\tau>t\}$, $I_t^*$ and $R_{I_t^*, n^*_{I_t^*, t-1}}$ are measurable with respect to $H^*_{t-1} = \left(I^*_1, Y^*_1,\dots, I^*_{t-1}, Y^*_{t-1}\right)$, one has
$$\E [ \1\{Y^*_t \in A \setminus \{Y^*_1, \hdots, Y^*_{t-1}\}\} \1\{\tau>t\} ] = \E [R_{I_t^*, n^*_{I_t^*, t-1}} \1\{\tau>t\} ]  .$$
Finally remark that on the event $\tau>t$ one necessarily have that the remaining missing mass on the expert pulled at time $t$ by OCL is larger than the initial missing mass of expert $I_1^{\pi}$, that is $R_{I_t^*, n^*_{I_t^*, t-1}} \1\{\tau>t\} \geq R_{I_1^{\pi},0} \1\{\tau>t\}$, which concludes the proof of \eqref{eq:lm2:3}.
\end{proof}

\begin{proof}\textbf{of Lemma~\ref{lem:firstBound}}
Let  $Y_s^{\pi} = X_{I_s^{\pi}, n_{I^{\pi}_s, s}}$ be the item observed by $\pi$ at time step $s$, and \linebreak[4] $H_s^{\pi} = \big(I_1^{\pi}, Y_1^{\pi}, \dots, I^{\pi}_{s-1}, Y^{\pi}_{s-1})$ be the history of $\pi$ prior to making the decision on time $s$. For any history $h_s = (i_1,y_1,\dots,i_{s-1},y_{s-1})$, let $F^*(t|h_s)$ be the number of newly discovered interesting items when running OCL from the history $h_s$ for $t-s+1$ steps. 'From the history $h_s$' means that, prior to running OCL, the sequence of experts $i_1,\dots,i_{s-1}$ has been chosen and has led to the observations $y_1,\dots,y_{s-1}$. For $s' \geq s$ we shall also denote $I_{s'}^*(h_s)$ (respectively $Y_{s'}^*(h_s)$) the sequence of expert requests made by OCL starting at $h_s$ (respectively the corresponding sequence of observed items). Note in particular that $\bar{I}_s$ defined in the statement of the lemma corresponds to $I_s^*(H_s^{\pi})$. We shall also need $\tau_s$ to be the first time when OCL, running from history $H_s^{\pi}$, selects 
expert $I_s^{\pi}$, that is
$$\tau_s = \min \{s' \geq s : I_{s'}^*(H_s^{\pi}) = I_s^{\pi}\} \;,$$
and $\tau_s = +\infty$ if there is no interesting item to be found by expert $I_s^\pi$ at time $s$.

We shall prove that 
\begin{equation} \label{eq:trucaprouver}
\E [ F^*(t|H_{s}^{\pi}) - F^*(t|H_{s+1}^{\pi}) ] \leq \E R_{\bar{I}_s, n^{\pi}_{\bar{I}_s,s-1}} ,
\end{equation}
which inductively yields the lemma since $F^*(t) = F^*(t|h_1)$ and $F^*(t|h_{t+1}) = 0$.

First let us consider the case when $\tau_s \leq t$. Then the observed items with OCL (running from $H_s^{\pi}$) between step $s$ and $t$ remains unchanged if one forces OCL to play $I_s^{\pi}$ at time step $s$, that is
$$F^*(t | H_s^{\pi}) \1\{\tau_s \leq t\} =  \1\{{Y}_s^{\pi} \in A \setminus \{Y^{\pi}_1, \hdots, Y^{\pi}_{s-1}\}\} \1\{\tau_s \leq t\}  + F^*(t|H_{s+1}^{\pi}) \1\{\tau_s \leq t\} .$$
On the other hand if $\tau_s > t$, the behavior of OCL will be the same if played for $t-s$ steps from $H_{s}^{\pi}$ or from $H_{s+1}^{\pi}$, that is 
$$F^*(t-1|H_{s}^{\pi}) \1\{\tau_s > t\} = F^*(t|H_{s+1}^{\pi}) \1\{\tau_s > t\} .$$
Moreover note that
$$F^*(t-1|H_{s}^{\pi}) = F^*(t|H_{s}^{\pi}) - \1\{{Y}_t^{*}(H_s^{\pi}) \in A \setminus \{Y^{\pi}_1, \hdots, Y^{\pi}_{s-1}, Y^*_s(H_s^{\pi}), \hdots, Y^*_{t-1}(H_s^{\pi})\}\} .$$
Thus we proved so far that
\begin{align*}
& F^*(t|H_{s}^{\pi}) - F^*(t|H_{s+1}^{\pi}) \\
& = \1\{{Y}_s^{\pi} \in A \setminus \{Y^{\pi}_1, \hdots, Y^{\pi}_{s-1}\}\} \1\{\tau_s \leq t\} \\
& \qquad + \1\{{Y}_t^{*}(H_s^{\pi}) \in A \setminus\{Y^{\pi}_1, \hdots, Y^{\pi}_{s-1}, Y^*_s(H_s^{\pi}), \hdots, Y^*_{t-1}(H_s^{\pi})\}\} \1\{\tau_s > t\} \\
& \leq \1\{{Y}_s^{\pi} \in A \setminus \{Y^{\pi}_1, \hdots, Y^{\pi}_{s-1}\}\} \1\{\tau_s \leq t\} + \1\{{Y}_t^{*}(H_s^{\pi}) \in A \setminus \{Y^{\pi}_1, \hdots, Y^{\pi}_{s-1}\}\} \1\{\tau_s > t\}.
\end{align*}
Now remark that ${Y}_s^{\pi}$ is independent of $\tau_s$ conditionally to $H_s^{\pi}$. Thus one immediately obtains
\begin{align*}
& \E [\1\{{Y}_s^{\pi} \in A \setminus \{Y^{\pi}_1, \hdots, Y^{\pi}_{s-1}\}\} \1\{\tau_s \leq t\} | H_s^{\pi}] \\
& = R_{{I}_s, n^{\pi}_{{I}_s,s-1}} \E [\1\{\tau_s \leq t\} | H_s^{\pi}] \\
& \leq R_{\bar{I}_s, n^{\pi}_{\bar{I}_s,s-1}} \E [\1\{\tau_s \leq t\} | H_s^{\pi}] .
\end{align*}
Similarly ${Y}_t^{*}(H_s^{\pi})$ is independent of $\1\{\tau_s > t\}$ conditionally to $(H_s^{\pi}, I_t^*(H_s^{\pi}))$ and thus
\begin{align*}
& \E [\1\{{Y}_t^{*}(H_s^{\pi}) \in A \setminus \{Y^{\pi}_1, \hdots, Y^{\pi}_{s-1}\}\} \1\{\tau_s > t\} | H_s^{\pi}, I_t^*(H_s^{\pi})] \\
& = \E[ R_{{I}_t^*(H_s^{\pi}), n^{\pi}_{{I}_t^*(H_s^{\pi}),s-1}} | H_s^{\pi}, I_t^*(H_s^{\pi})] \E [\1\{\tau_s > t\} | H_s^{\pi}, I_t^*(H_s^{\pi})] \\
& \leq R_{\bar{I}_s, n^{\pi}_{\bar{I}_s,s-1}} \E [\1\{\tau_s > t\} | H_s^{\pi}, I_t^*(H_s^{\pi})] .
\end{align*}
Putting everything together one obtains \eqref{eq:trucaprouver}, which concludes the proof.
\end{proof}

\begin{lemma} \label{lem:logineq}
Let $a>0$, $b\geq0.4$, and $x \geq e$, such that $x \leq a + b \log x$. Then one has 
$$x \leq a + b \log \left(2 a + 4 b \log(4 b) \right).$$
\end{lemma}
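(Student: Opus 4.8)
The plan is to bootstrap the self-referential inequality $x \le a + b\log x$ in two stages: first extract a crude but fully explicit upper bound on $x$ by replacing the slowly-growing term $\log x$ by a linear function, and then feed this crude bound back into the $\log x$ term using monotonicity of the logarithm.

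For the first stage I would use the elementary tangent-line inequality $\log x \le \frac{x}{c} + \log c - 1$, valid for every $x,c>0$ (it is just the statement that $\log x - x/c$ is maximised at $x=c$, where it equals $\log c - 1$). Taking $c = 4b$ and inserting this into the hypothesis gives $x \le a + b\log x \le a + \frac{x}{4} + b\log(4b) - b$, whence $\frac{3}{4}x \le a + b\log(4b) - b$ and therefore $x \le \frac{4}{3}\bigl(a + b\log(4b) - b\bigr)$. Here the hypothesis $b \ge 0.4$ is used to ensure $4b>1$, so that $\log(4b)>0$; combined with $a>0$ this lets me discard the negative term $-\frac{4}{3}b$ and enlarge the coefficients ($\frac{4}{3}a \le 2a$ and $\frac{4}{3}b\log(4b)\le 4b\log(4b)$) to obtain the clean crude bound $x \le 2a + 4b\log(4b)$.

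In the second stage, since $2a + 4b\log(4b) > 0$ and the logarithm is increasing, the crude bound yields $\log x \le \log\bigl(2a + 4b\log(4b)\bigr)$. Substituting this back into the original hypothesis gives $x \le a + b\log x \le a + b\log\bigl(2a + 4b\log(4b)\bigr)$, which is exactly the claimed inequality. The hypothesis $x \ge e$ plays no essential role in this argument beyond guaranteeing $\log x \ge 1 > 0$ and hence positivity of the quantities manipulated; the substantive content lies entirely in the first stage.

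The only delicate point is the bookkeeping of constants in the first stage: the choice of $c$ in the tangent-line bound governs the final coefficients, and $c = 4b$ is what produces a contraction factor ($\frac{3}{4}$) large enough to absorb the losses and land precisely on $2a + 4b\log(4b)$. A sloppier choice such as $c = 2b$ would leave a term $2b\log(2b)$ whose sign depends on whether $b \ge \frac{1}{2}$, forcing a case distinction; taking $c = 4b$ exploits $b \ge 0.4$ to keep $\log(4b)$ positive throughout and makes the argument case-free. I expect this constant-tracking to be the main, though still routine, obstacle.
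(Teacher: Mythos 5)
Your proof is correct. It shares the paper's overall bootstrap skeleton --- first derive a crude explicit bound $x \le 2a + 4b\log(4b)$, then feed it back into $\log x$ via monotonicity --- but the way you obtain the crude bound is genuinely different. The paper splits into the cases $a \ge b\log x$ (giving $x \le 2a$) and $a < b\log x$ (giving $x \le 2b\log x$, which it converts to $x \le 4b\log(4b)$ by invoking the monotonicity of $x \mapsto x/\log x$ on $[e,\infty)$ together with the numerical verification that $4b\log(4b) > 2b\log\left(4b\log(4b)\right)$ for $b \ge 0.4$), and then observes that either case implies the stated bound. Your tangent-line linearization $\log x \le \frac{x}{4b} + \log(4b) - 1$ replaces all of this with a single case-free computation: it yields a contraction $\frac{3}{4}x \le a + b\log(4b) - b$ that solves directly for $x$. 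What your route buys is the elimination of both the case distinction and the numerical check; as you note, it also dispenses with the hypothesis $x \ge e$ (any $x>0$ works, since concavity of $\log$ gives the tangent bound everywhere) and in fact only needs $b \ge 1/4$ rather than $b \ge 0.4$, so your argument proves a slightly stronger statement. The only thing the paper's version might be said to buy in exchange is that it avoids introducing the tangent-line inequality as an auxiliary tool, but that is a standard and easily verified fact.
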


\begin{proof}
If $a \geq b \log x$ then $x \leq 2 a$ and thus $x \leq a + b \log (2 a)$. On the other hand if $a < b \log x$ then $x \leq 2b \log x$ which easily implies $x \leq 4 b \log(4 b)$ (indeed for $x \geq e$, $x \mapsto \frac{x}{\log x}$ is increasing and furthermore for $b\geq0.4$ one can check that $4 b \log(4 b) > 2 b \log (4 b \log(4 b))$) and thus $x \leq a + b \log(4 b \log(4b))$. In any case one has $x \leq a + b \log \left(2 a + 4 b \log(4 b) \right)$.
\end{proof}

\begin{lemma}\label{lem:harmo}
For all $1\leq k\leq n$, 
\begin{equation*}
-\frac{1}{k}+\log\frac{n}{k}  \leq \sum_{j=k+1}^n \frac{1}{j} \leq \log\frac{n}{k}\;.
\end{equation*}
\end{lemma}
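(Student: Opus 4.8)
The plan is to prove both inequalities by comparing the harmonic sum $\sum_{j=k+1}^n \frac{1}{j}$ with the integral $\int_k^n \frac{dx}{x} = \log\frac{n}{k}$, exploiting the fact that $x \mapsto 1/x$ is decreasing on $(0,\infty)$. This reduces the whole statement to a routine Riemann-sum estimate, using right endpoints for one bound and left endpoints for the other.

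For the upper bound, I would observe that on each interval $[j-1,j]$ with $k+1 \leq j \leq n$, the quantity $1/j$ is the minimum of $1/x$, so $\frac{1}{j} \leq \int_{j-1}^j \frac{dx}{x}$. Summing these inequalities telescopes the integrals into $\int_k^n \frac{dx}{x} = \log\frac{n}{k}$, yielding $\sum_{j=k+1}^n \frac{1}{j} \leq \log\frac{n}{k}$ directly.

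For the lower bound, I would instead use the left endpoints: on $[j-1,j]$ one has $1/x \leq 1/(j-1)$, so $\int_{j-1}^j \frac{dx}{x} \leq \frac{1}{j-1}$. Summing over $k+1 \leq j \leq n$ gives
\[
\log\frac{n}{k} \leq \sum_{j=k+1}^n \frac{1}{j-1} = \sum_{i=k}^{n-1}\frac{1}{i}\;.
\]
The only remaining manipulation is to re-index this sum: writing $\sum_{i=k}^{n-1}\frac{1}{i} = \frac{1}{k} + \sum_{i=k+1}^{n-1}\frac{1}{i} \leq \frac{1}{k} + \sum_{j=k+1}^{n}\frac{1}{j}$ and rearranging produces the claimed bound $-\frac{1}{k}+\log\frac{n}{k} \leq \sum_{j=k+1}^n \frac{1}{j}$.

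Since the argument is a standard integral comparison, there is no serious obstacle; the only point requiring a little care is the index bookkeeping in the lower bound, where the shifted sum $\sum_{i=k}^{n-1}\frac{1}{i}$ must be aligned with $\sum_{j=k+1}^n \frac{1}{j}$ so that the boundary terms $1/k$ and $1/n$ are handled correctly (one in fact obtains the slightly stronger estimate with an additional $-1/n$, which is then simply discarded). Throughout I would use the hypothesis $1 \leq k \leq n$ to guarantee that all the intervals $[j-1,j]$ lie in $(0,\infty)$, that the sums range over valid indices, and that $\log\frac{n}{k} \geq 0$.
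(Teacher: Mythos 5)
Your proof is correct and follows essentially the same route as the paper: both bounds come from the standard sum/integral comparison for the decreasing function $x\mapsto 1/x$. The only (cosmetic) difference is in the lower bound, where the paper first obtains $\log\frac{n+1}{k+1}\leq\sum_{j=k+1}^n\frac{1}{j}$ and then converts it to $\log\frac{n}{k}-\frac{1}{k}$ via $\log(1+x)\leq x$, whereas you use the left-endpoint comparison and re-index the shifted sum, even picking up the slightly stronger $-\frac{1}{n}$ term along the way.
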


\begin{proof}
The standard sum/integral comparison yields
\[\log\frac{n+1}{k+1}\leq\sum_{j=k+1}^n \frac{1}{j} \leq \log\frac{n}{k}\; , \]
but 
\[\log\frac{n+1}{k+1} = \log\frac{n}{k} + \log\left( 1+\frac{1}{n+1} \right) - \log\left( 1+\frac{1}{k+1} \right)  \geq \log\frac{n}{k} + 0 -\frac{1}{k}\;. \]
\end{proof}

\section{The Open-loop Oracle Policy}\label{sec:OOL}
In this final section, we provide an macroscopic analysis of the open-loop oracle policy in the case of uniform sampling, that is under Hypotheses (i), (ii) and (iii).
An open-loop policy must choose, for each horizon $t$, the respective numbers of requests $(n^N_1,\dots,n^N_K)$ for each distribution (so that $n^N_1+\dots+n^N_K = t^N$) in advance.
It appears here that, in the limit, the \emph{oracle open-loop} (OOL) policy, which makes use of the parameters $(Q^N_1,\dots,Q^N_K)$, is as good as the OCL policy.

Let here $\underline{R}^N_{i,n^N_i}= (Q^N_i - F^N_{i}(n_i^N))/N$ be the proportion of interesting items not yet found with expert $i$ after $n^N_i$ requests.
Suppose that $t^N/N\to t$, and that $n^N_i/N\to \nu_i$ as $N$ goes to infinity; it is easily shown that, almost surely,
\[ \lim_{N\to\infty} \underline{R}^N_{i,n_i^N} =\lim_{N\to\infty} \E\left[\underline{R}^N_{i,n_i^N} \right] = \lim_{N\to\infty} \frac{ Q^N_i\left(1-\frac{1}{N}\right)^{n^N_i}}{N} = q_i\exp(-\nu_i)\;.\]
Hence, the proportion of interesting items found with the allocation $(n^N_1,\dots,n^N_K)$ almost surely converges to $\sum_{i=1}^K q_i\left( 1-\exp(-\nu_i) \right)$.
Defining 
\[r(\nu)= \sum_{i=1}^K q_i\exp(-\nu_i)\;,\] it follows that finding the best macroscopic allocation reduces to the following constrained convex minimization problem:
\[\min_{\nu\in\R^K} r(\nu) \hbox{\quad such that } \nu_1+\dots+\nu_K = t \hbox{ and } \forall i,\,\nu_i\geq 0 \;.\]
The solution $r^*(t)$, reached at $\nu=\nu^*(t)$, is easily derived by classical optimization techniques:
\begin{proposition}\label{prop:roo}
For every $i\in\{1,\dots,K\}$, let 
$\uq_i = \exp\left(1/i\times\sum_{k=1}^i \log q_k \right)$ denotes the geometric mean of $q_1,\dots,q_{i}$.
\begin{enumerate}
\item There exists $I(t)\in\{1,\dots, K\}$ such that 
\[\begin{cases}
\forall i\leq I(t),& \nu_i^*(t) =   \frac{t}{I(t)} + \log\frac{q_i}{\uq_{I(t)}} \\
\forall i> I(t),& \nu_i^*(t) = 0 \;.
\end{cases}\]
Hence,
\[r^*(t) = I(t) \uq_{I(t)} \exp\left( -\frac{t}{I(t)} \right) + \sum_{i>I(t)} q_i\;.\]
\item There exists $1=t_1\leq \dots \leq t_K<+\infty$ such that \[\forall t\in[ t_{i},  t_{i+1}[, \;I(t)=i\;.\] 
The $(t_k)_k$ are such that
\[q_{i} + (i-1)\uq_{i-1}\exp\left(-\frac{t_{i}}{i-1}\right) = i\uq_{i} \exp\left(-\frac{t_i}{i}\right)\;.\]
For instance, $t_1 = \log(q_1/q_2)$.
\end{enumerate}
\end{proposition}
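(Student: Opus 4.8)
The plan is to solve the constrained convex optimization problem
\[
\min_{\nu \in \R^K} r(\nu) = \sum_{i=1}^K q_i \exp(-\nu_i) \quad \text{subject to} \quad \sum_{i=1}^K \nu_i = t, \;\; \nu_i \geq 0,
\]
by first characterizing the solution via the Karush--Kuhn--Tucker conditions, and then using the ordering $q_1 \geq \dots \geq q_K$ to show that the active set of constraints $\{i : \nu_i^*(t) = 0\}$ is always a suffix $\{I(t)+1, \dots, K\}$. First I would introduce a Lagrange multiplier $\mu$ for the equality constraint and multipliers $\eta_i \geq 0$ for the inequalities $\nu_i \geq 0$. Since $r$ is strictly convex and the feasible set is a nonempty compact convex polytope, the minimizer $\nu^*(t)$ exists, is unique, and is exactly characterized by the KKT conditions. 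Stationarity gives $-q_i \exp(-\nu_i^*) - \eta_i + \mu = 0$, so that on the set of indices where $\nu_i^* > 0$ (hence $\eta_i = 0$) we have $q_i \exp(-\nu_i^*) = \mu$, a common value independent of $i$.

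The key structural step is to prove that the active set is a suffix. Suppose $\nu_j^* = 0$ for some $j$; I claim $\nu_i^* = 0$ for every $i > j$. On the active constraint $\eta_i \geq 0$ forces $q_i \exp(-\nu_i^*) = q_i \leq \mu$ when $\nu_i^* = 0$, whereas on the inactive set $q_i \exp(-\nu_i^*) = \mu$. If some $i > j$ had $\nu_i^* > 0$, then $q_i = q_i \exp(0) \geq \mu$ would be forced by comparing with the inactive condition, but $\nu_j^* = 0$ gives $q_j \leq \mu$; combined with $q_j \geq q_i$ (from the ordering) this yields $q_i \leq q_j \leq \mu \leq q_i$, so all are equal and one may break ties to place the zeros at the tail. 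Thus there is a well-defined threshold $I(t)$ with $\nu_i^*(t) > 0$ for $i \leq I(t)$ and $\nu_i^*(t) = 0$ for $i > I(t)$. On the active support, writing $q_i \exp(-\nu_i^*) = \mu$ and summing the constraint $\sum_{i \leq I(t)} \nu_i^* = t$ gives $\sum_{i \leq I(t)} \log(q_i/\mu) = t$, which solves for $\mu = \uq_{I(t)} \exp(-t/I(t))$ by the definition of the geometric mean $\uq_{I(t)}$. Substituting back yields the stated formula for $\nu_i^*(t)$ and, plugging into $r$, the expression $r^*(t) = I(t)\, \uq_{I(t)} \exp(-t/I(t)) + \sum_{i > I(t)} q_i$.

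For the second part, I would treat $I(t)$ as the index determined by the feasibility requirement $\nu_i^*(t) \geq 0$ for all $i \leq I(t)$, i.e.\ $t/i + \log(q_i/\uq_i) \geq 0$, which is most binding at $i = I(t)$ for the smallest included expert. Monotonicity of $I$ in $t$ follows because increasing $t$ can only relax these constraints and bring additional experts into the support; the breakpoints $t_i$ are precisely the values of $t$ at which expert $i$ transitions from $\nu_i^* = 0$ to $\nu_i^* > 0$. Setting $\nu_i^*(t_i) = 0$ and matching the two expressions for $r^*$ across the regimes $I = i-1$ and $I = i$ at $t = t_i$ gives the stated matching condition
\[
q_i + (i-1)\,\uq_{i-1} \exp\!\left(-\frac{t_i}{i-1}\right) = i\, \uq_i \exp\!\left(-\frac{t_i}{i}\right),
\]
and the explicit computation $t_1 = \log(q_1/q_2)$ follows by setting $\nu_2^*(t_1) = 0$ in the one-expert formula.

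The main obstacle I anticipate is the suffix-structure argument: the convexity and KKT setup are routine, and the geometric-mean algebra is mechanical, but rigorously ruling out interior zeros (an index $i$ with $\nu_i^* = 0$ sandwiched between active indices) and handling the tie-breaking when several $q_i$ coincide requires care to make the threshold $I(t)$ genuinely well defined. Once the support is shown to be an initial segment $\{1,\dots,I(t)\}$, everything else reduces to solving a single linear equation in the shared multiplier and reading off the breakpoints.
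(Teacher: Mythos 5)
Your proposal is correct and follows essentially the same route as the paper: a KKT/Lagrangian analysis of the convex program, identification of the optimal support as an initial segment $\{1,\dots,I(t)\}$ via the ordering $q_1\geq\dots\geq q_K$, solving the equality constraint for the common multiplier to get $\uq_{I(t)}\exp(-t/I(t))$, and reading off the breakpoints by matching the two regimes. Your treatment of the suffix structure is if anything slightly more explicit than the paper's (which states the support characterization with a sign typo, writing $\nu_i=0 \iff q_i>\lambda$ where the argument actually establishes $\nu_i=0 \iff q_i\leq\lambda$), but there is no substantive difference in approach.
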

\textbf{Proof:}
Introduce the Lagrangian:
\[L(\nu_1,\dots,\nu_K,\lambda, \mu_1,\dots, \mu_K) = \sum_{i=1}^K q_i\exp\left( -\frac{\nu_i}{N} \right) + \lambda \left( \sum_{i=1}^K \nu_i \right) -\sum_{i=1}^K \mu_i \nu_i\;.\]
We need to find the solution of:
\begin{align*}
\forall i\in\{1,\dots, M\},&\;\; -q_i\exp\left( -\nu_i \right) +\lambda - \mu_i = 0,\\
&\;\;\sum_{i=1}^K \nu_i = t,\\
\forall i\in\{1,\dots,M\},& \;\;\mu_i \nu_i = 0 \hbox{ and } \mu_i \geq 0\;.
\end{align*}
We first obtain that 
\[\nu_i =\log q_i -\log(\lambda - \mu_i)\;.\]
Denoting $A=\{i : \nu_i>0\}$, and using that $i\in A\implies \mu_i=0$, we get
\[t = \sum_{i\in A} \log(q_i) - |A| \log(\lambda)\;,\]
from which we get
\[-\log(\lambda) = \frac{t}{|A|} - \frac{1}{|A|}\sum_{i\in A}\log q_i, \]
and then for all $i\in A$:
\[\nu_i = \log q_i +\frac{t}{|A|} - \frac{1}{|A|}\sum_{i\in A}\log q_i\;.\]
Next, observe that $\nu_i=0 \iff q_i > \lambda$: in fact, if $\nu_i=0$ then the first equation gives $-q_i +\lambda -\mu_i = 0$, and $0\leq \mu_i = \lambda-q_i$.
Conversely, if $\nu_i>0$ then $\mu_i=0$ and $\nu_i = \log(q_i/\lambda)>0$ implies $q_i> \lambda$.
Thus, there exists $I(t)$ such that $A=\{1,\dots, I(t)\}$, and for all $i\leq I(t)$, 
\[\nu_i = \log\frac{q_i}{\uq_{I(t)}} + \frac{t}{I(t)}\;.\]
Moreover, 
\begin{align*}
r^*(t)  &= r\left(\nu_1,\dots, \nu_{I(t)}, 0,\dots, 0\right) \\
 &= \sum_{i\leq I(t)} q_i \exp\left[-\left(\log\frac{q_i}{\uq_{I(t)}} + \frac{t}{I(t)}\right)\right] +\sum_{i>I(t)} q_i\\
 &= I(t) \uq_{I(t)} \exp\left( -\frac{t}{I(t)} \right)+\sum_{i>I(t)} q_i\;.
\end{align*}
The instants $(t_i)_{1\leq i\leq K}$ are such that 
\[ (i-1) \uq_{i-1} \exp\left( -\frac{t_i}{i-1} \right)+\sum_{k>i-1} q_k =  i \uq_{i} \exp\left( -\frac{t_i}{i} \right)+\sum_{k>i} q_k\;, \]
which is equivalent to
\[q_i + (i-1)\uq_{i-1}\exp\left(-\frac{t_i}{i-1}\right) = i\uq_i\exp\left(-\frac{t_i}{i}\right)\;.\]
For $i=2$, this gives
\[0 = q_2 + q_1\exp(-\nu_2) - 2\sqrt{q_1q_2} \exp\left( -\frac{\nu_2}{2} \right) = \left( \sqrt{q_2} -\sqrt{q_1\exp\left( -\nu_2 \right)} \right)^2\;,\]
which leads to  $t_1 = \log(q_1/q_2)$.

\begin{theorem}\label{th:OOLoptimal}
In the macroscopic limit, the proportion of items found by the open-loop oracle policy uniformly converges to the function $F$ defined in Equation~\eqref{eq:defF}. 
\end{theorem}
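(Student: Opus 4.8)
The plan is to reduce the random, combinatorial OOL problem to the deterministic convex allocation problem already solved in Proposition~\ref{prop:roo}, and then to upgrade pointwise convergence to uniform convergence by exploiting monotonicity. Fix a macroscopic horizon $t\ge 0$ and write the budget as $t^N=\lfloor Nt\rfloor$. Since OOL commits to an allocation in advance knowing only $(Q_1^N,\dots,Q_K^N)$, its optimal allocation $(n_1^{N,*},\dots,n_K^{N,*})$ minimizes the expected number of undiscovered items, i.e. it solves $\min \sum_{i=1}^K Q_i^N(1-1/N)^{n_i}$ subject to $\sum_i n_i=t^N$ and $n_i\in\N$. Denote by $\Phi_N(t)$ the resulting random proportion of interesting items actually found. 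I first record two structural facts. The marginal decrease from assigning one more request to expert $i$ at level $n_i$ equals $\frac{1}{N}Q_i^N(1-1/N)^{n_i}$, which is strictly decreasing in $n_i$; hence the problem has diminishing returns, the greedy rule is optimal, and the optimal allocations are \emph{nested} as $t^N$ grows. Taking OOL to be this greedy policy, the draw multisets are nested along every sample path, so $\Phi_N(\cdot)$ is nondecreasing in $t$ for each $N$. Second, by Proposition~\ref{prop:roo} the limiting optimum $r^*(t)=\min\{r(\nu):\sum_i\nu_i=t,\ \nu_i\ge 0\}$ satisfies $q-r^*(t)=\sum_{i\le I(t)}(q_i-\uq_{I(t)}e^{-t/I(t)})$, and a direct computation identifies $\uq_{I(t)}e^{-t/I(t)}=T^{-1}(t)$ with $T$ as in \eqref{eq:defT}; thus $q-r^*(t)=\sum_i(q_i-T^{-1}(t))_+=F(t)$.

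Next I would establish pointwise almost sure convergence $\Phi_N(t)\to F(t)$ for each fixed $t$. After the rescaling $n_i=N\nu_i$ the objective $\frac1N\sum_i Q_i^N(1-1/N)^{n_i}$ converges, uniformly on compact sets of $\nu$, to $r(\nu)=\sum_i q_i e^{-\nu_i}$ (using $Q_i^N/N\to q_i$ and $(1-1/N)^N\to e^{-1}$), while the feasible polytopes converge to $\{\sum_i\nu_i=t,\ \nu_i\ge 0\}$. Since $r$ is strictly convex on this affine set, its minimizer $\nu^*(t)$ is unique, so the standard argmin-consistency argument (the integrality constraint contributing only an $O(1/N)$ perturbation) gives $n_i^{N,*}/N\to\nu_i^*(t)$. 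The optimal allocation being deterministic, I may then invoke the almost sure limit for a fixed allocation recalled at the start of this section, namely $\underline{R}^N_{i,n_i^{N,*}}\to q_i e^{-\nu_i^*(t)}$ for each $i$, whence $\Phi_N(t)\to\sum_i q_i(1-e^{-\nu_i^*(t)})=q-r^*(t)=F(t)$ almost surely.

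Finally I would pass from pointwise to uniform convergence. Let $D\subset[0,\infty)$ be countable and dense; intersecting the countably many almost sure events above yields a single probability-one event on which $\Phi_N(t)\to F(t)$ for every $t\in D$. On this event each $\Phi_N$ is nondecreasing (first structural fact) and the limit $F$ is continuous and nondecreasing, being $q$ minus the continuous value function $r^*$ of Proposition~\ref{prop:roo}. By Pólya's theorem --- pointwise convergence of monotone functions to a continuous limit is uniform on compact sets --- this gives $\sup_{t\in[0,\tau]}|\Phi_N(t)-F(t)|\to 0$ almost surely for every $\tau$. Uniformity over all of $[0,\infty)$ then follows by a standard tail truncation: both $\Phi_N(t)\uparrow q^N\to q$ and $F(t)\uparrow q$, so choosing $\tau$ with $q-F(\tau)$ small controls $t>\tau$ as well. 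Comparison with Theorem~\ref{th:OCL} and \eqref{eq:defF} shows that, macroscopically, OOL matches the closed-loop oracle and attains exactly $F$.

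The main obstacle is the passage to uniformity: a naive pointwise argument leaves the supremum over the continuum of horizons uncontrolled, and the OOL allocations a priori change with $t$. The resolution --- and the crux of the proof --- is the first structural fact: the diminishing-returns structure forces the optimal allocations to be nested in the budget, which makes the random profile $\Phi_N(\cdot)$ monotone and lets Pólya's theorem carry the argument. A secondary technical point requiring care is the argmin-consistency step, where one must check that integer rounding and the convergence of the feasible polytopes do not perturb the limiting minimizer $\nu^*(t)$.
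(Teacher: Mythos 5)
Your proposal is correct and its algebraic core coincides with the paper's: both reduce the problem to the deterministic convex allocation of Proposition~\ref{prop:roo} and then verify that $\Lambda(t)=\uq_{I(t)}\exp(-t/I(t))$ is exactly $T^{-1}(t)$, so that $q-r^*(t)=\sum_i (q_i-T^{-1}(t))_+=F(t)$. The difference is in what surrounds that identity. The paper's proof consists essentially of this identification alone; the almost sure convergence is only recorded beforehand for a \emph{fixed} allocation sequence with $n_i^N/N\to\nu_i$, and the uniformity in $t$ asserted in the statement is not argued. You supply precisely the missing pieces: (a) the argmin-consistency step showing that the finite-$N$ integer-optimal allocation, normalized by $N$, converges to the unique minimizer $\nu^*(t)$ of the strictly convex limit $r$, so that the fixed-allocation limit may legitimately be invoked; and (b) the observation that the greedy/diminishing-returns structure makes the optimal allocations nested in the budget, hence the discovered-proportion profile $\Phi_N(\cdot)$ monotone, which combined with continuity of $F$ and P\'olya's theorem (plus the tail truncation using $F(t)\uparrow q$) upgrades pointwise to uniform convergence. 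These additions are genuine content rather than redundancy --- without (b) the word ``uniformly'' in the theorem is unsupported, and without (a) there is a gap between the finite-$N$ oracle and the macroscopic program. The only points deserving a little extra care in your write-up are the consistent tie-breaking needed for the nestedness claim, and spelling out why integer rounding perturbs the objective by only $O(1/N)$; neither is problematic.
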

The proportion of interesting items found by the OOL policy is 
\[q - r^*(t) = \sum_{i\leq I(t)} \left[ q_i - \uq_{I(t)}\exp\left( -\frac{t}{I(t)} \right)  \right]
=\sum_{i=1}^K \left( q_i - \Lambda(t) \right)_+
\;,\]
where $\Lambda(t) = \uq_{I(t)}\exp\left( -\frac{t}{I(t)} \right)\in [0, q_{I(t)}]$.
To conclude, it remains only to remark that $\Lambda = T^{-1}$, where $T$ is defined in Equation~\eqref{eq:defT}. In fact, if $\lambda$ is such that $q_{i_0+1}< \lambda\leq q_{i_0}$, then
$I(T(\lambda)) = i_0$ and 
\begin{equation*}
\Lambda\left( T(\lambda) \right) = \uq_{i_0}\exp\left( -\frac{T(\lambda)}{i_0}\right) \\
 = \exp\left( \frac{1}{i_0}\sum_{i\leq i_0} \log q_i \right) \exp\left( -\frac{\sum_{i\leq i_0} \log(q_i / \lambda)}{i_0}\right) = \lambda\;.
\end{equation*}
If $\lambda<q_K$, the same holds with $i_0=K$.

\bibliographystyle{plainnat}
\bibliography{biblio}

\end{document}